\def\BibTeX{{\rm B\kern-.05em{\sc i\kern-.025em b}\kern-.08em
    T\kern-.1667em\lower.7ex\hbox{E}\kern-.125emX}}
\newcolumntype{P}[1]{>{\centering\arraybackslash}p{#1}}
\DeclarePairedDelimiter\ceil{\lceil}{\rceil}
\newtheorem{theorem}{Theorem}
\newtheorem{lemma}{Lemma}
\DeclareMathOperator*{\argmin}{arg\,min}
\newcommand{\sign}{\text{sign}}
\begin{document}

\title{Online AUC Optimization for Sparse High-Dimensional Datasets}

\author{
\IEEEauthorblockN{Baojian Zhou\IEEEauthorrefmark{1}, Yiming Ying\IEEEauthorrefmark{2}, Steven Skiena\IEEEauthorrefmark{1}}
\IEEEauthorblockA{\IEEEauthorrefmark{1}Department of Computer Science, Stony Brook University, Stony Brook, USA
\\\{baojian.zhou, skiena\}@cs.stonybrook.edu}
\IEEEauthorblockA{\IEEEauthorrefmark{2}Department of Mathematics and Statistics, University at Albany, Albany, USA
\\yying@albany.edu}}

\maketitle

\begin{abstract}
The Area Under the ROC Curve (AUC) is a widely used performance measure for imbalanced classification arising from many application domains where high-dimensional sparse data is abundant. In such cases,  each $d$ dimensional sample has only $k$ non-zero features with $k \ll d$, and data arrives sequentially in a streaming form. Current online AUC optimization algorithms have high per-iteration cost $\mathcal{O}(d)$ and usually produce non-sparse solutions in general, and hence are not suitable for handling the data challenge mentioned above. 
 
 In this paper, we aim to directly optimize the AUC score for high-dimensional sparse datasets under online learning setting and propose a new algorithm, \textsc{FTRL-AUC}. Our proposed algorithm can process data in an online fashion with a much cheaper per-iteration cost $\mathcal{O}(k)$, making it amenable for high-dimensional sparse streaming data analysis. Our new algorithmic design critically depends on a novel reformulation of the U-statistics AUC objective function as the empirical saddle point reformulation, and the innovative introduction of the ``lazy update'' rule so that the per-iteration complexity is dramatically reduced from $\mathcal{O}(d)$ to $\mathcal{O}(k)$. Furthermore, \textsc{FTRL-AUC} can inherently capture sparsity more effectively by applying a generalized Follow-The-Regularized-Leader (FTRL) framework. 
 
 Experiments on real-world datasets demonstrate that \textsc{FTRL-AUC} significantly improves both run time and model sparsity while achieving competitive AUC scores compared with the state-of-the-art methods. Comparison with the online learning method for logistic loss demonstrates that \textsc{FTRL-AUC} achieves higher AUC scores especially when datasets are imbalanced.
\end{abstract}

\begin{IEEEkeywords}
online learning, Follow-The-Regularized-Leader, sparsity, AUC optimization
\end{IEEEkeywords}

\section{Introduction}
\label{sect:introduction}

The Area Under the ROC Curve (AUC) score \cite{bradley1997use,fawcett2006introduction,hanley1982meaning} is a widely used performance metric to measure the quality of classifiers, particularly in  the problem of imbalanced classification where the size of one class is much larger than the other class. In such problems as online spam filtering~\cite{spam2007}, ad click prediction~\cite{mcmahan2013ad}, and identifying malicious URLs~\cite{ma2009identifying}, the datasets are not only imbalanced but also \textit{high-dimensional} and \textit{sparse}. Specifically,  such datasets are of very high dimension $d$, but the number of nonzero features $k$ in each training sample is far less than the total number of features $d$, i.e., $k \ll d$.  We consider this  type of online data, which arrives in a streaming fashion, requiring real-time training and predictions. Hence, it is of critical importance to develop  efficient online AUC optimization algorithms which can make prediction in a real time manner upon receiving  new high dimensional sparse data. 

Online machine learning (\textit{online learning})~\cite{shalev2012online,hazan2016introduction,cesa2006prediction} is a natural choice to deal with data in a real time manner as it can update the model sequentially. 
Most of the existing online learning algorithms~\cite{duchi2011adaptive,xiao2010dual,shalev2012online,hazan2016introduction} focus on the error rate (accuracy) where the objective function is the sum of pointwise losses over individual examples. Thus, they are not suitable for the problem of AUC maximization because the AUC objective function is the sum of pairwise losses over pairs of examples in the form of U-statistics~\cite{clemenccon2008ranking}. Recently, considerable work has been done to develop variants of stochastic (online) gradient descent algorithms for AUC maximization.  Specifically, the work of~\cite{Wang2012,kar2013generalization,zhao2011online,ying2016stochastic,ying2019spauc,liu2018fast,natole2018stochastic} proposes a variant of stochastic gradient descent (SGD) algorithms and the particular work~\cite{ying2019spauc,liu2018fast,natole2018stochastic} uses stochastic (online) proximal gradient algorithms to handle the sparse $\ell^1$-regularization.  

However, such online AUC optimization algorithms do not explore the structure of high-dimensional sparse data, and the per-iteration cost of at least $O(d)$ is expensive when $d$ is very large. Moreover, the produced AUC maximization models~\cite{ying2019spauc,liu2018fast,natole2018stochastic} updated by using $\ell^1$ regularization (constraints) do not produce sparse solutions. As such, the existing online algorithms cannot apply to  high-dimensional sparse data where the response prediction time is critically important~\cite{langford2009sparse,mcmahan2013ad}. 

Inspired by a generalized Follow-The-Regularized-Leader (FTRL) framework~\cite{mcmahan2010adaptive,mcmahan2011follow,mcmahan2013ad}, in this paper, we propose an online AUC optimization algorithm, namely \textsc{FTRL-AUC}, for high-dimensional sparse datasets.  Our new algorithm \textsc{FTRL-AUC} has three novel improvements: 
\begin{itemize}[leftmargin=*]
\item Our proposed algorithm \textsc{FTRL-AUC} can handle the streaming data in an online manner, i.e. updating the model parameter upon receiving each individual data point without the need of  pairing it with previous ones. Motivated by \cite{ying2016stochastic,natole2018stochastic}, we achieve this by reformulating  the original objective function of AUC maximization as an empirical saddle point formulation. 
\item  The per-iteration cost of \textsc{FTRL-AUC}, making full use of inherent sparsity of datasets, is $\mathcal{O}(k)$, which is much less than $\mathcal{O}(d)$ of the existing methods. It is challenging to directly obtain $\mathcal{O}(k)$ complexity because the gradient of the original loss at each iteration is not sparse. To overcome this obstacle, our key idea is to introduce a new surrogate loss and a novel ``lazy update" rule to update current positive score and negative score, and then apply the generalized FTRL framework. 

\item Finally, our experimental results demonstrate that \textsc{FTRL-AUC} significantly improves both run time and model sparsity compared to the state-of-the-art methods. We also compare our algorithm with other online learning algorithms for logistic loss and demonstrate that it achieves higher AUC scores especially when datasets are imbalanced.
\end{itemize}

The rest of the paper is organized as follows. We first discuss the related work in Section~\ref{sect:related-work}. The problem formulation of online AUC optimization is given in Section~\ref{sect:problem-formulation}. In Section~\ref{sect:proposed-algorithm}, we present the new algorithm~\textsc{FTRL-AUC} with time complexity and regret bound analysis. We evaluate our method in Section~\ref{sect:experiments}. Section~\ref{sect:conclusion} concludes the paper. For the reproducibility purpose, source code of \textsc{FTRL-AUC} including all baseline methods and datasets can be accessed at: \url{https://github.com/baojianzhou/ftrl-auc}.

\section{Related Work}
\label{sect:related-work}

\noindent{\bf Online AUC optimization.\quad}   AUC optimization algorithms have been developed under batch learning setting~\cite{joachims2005support,herschtal2004optimising,Xinhua} where the predictor is generated based on the entire training samples. The reason why designing online AUC maximization methods is challenging is twofold:  1) in contrast to accuracy-based  classification approaches where the loss is based on one individual example, the loss function of AUC optimization involves a pair of  examples;  2) in practice, individual examples are arriving sequentially rather than pairs of examples.

The first work of online AUC optimization has been proposed in~\cite{zhao2011online,kar2013generalization,Wang2012}. The key idea is to use the gradient of a local empirical error which compares the current training example with all previous ones. However, these methods need to save all (or part of) previous training samples and need to compare the current example with previous ones, which leads to high per-iteration $\mathcal{O}(t d)$ at time $t$.
The appealing work by Gao et. al~\cite{gao2013one} follows the same spirit but observes, in the case of the least square loss, that updates of such algorithms only rely on covariance matrix. This leads to  a per-iteration cost $\mathcal{O}(d^2).$  Similarly, Ding et. al~\cite{ding2015adaptive}  use the same formulation but with an adaptive learning rate with similar per-iteration cost.    

More recent studies~\cite{ying2016stochastic,liu2018fast,natole2018stochastic} develop online AUC optimization with per-iteration cost $\mathcal{O}(d)$ with competitive convergence results. The key idea is to introduce a new equivalent saddle-point formulation of AUC optimization. In particular,  it is shown that maximizing AUC score is equivalent to solving a min-max problem, and hence stochastic primal-dual gradient-based algorithms can be applied. The work \cite{liu2018fast,natole2018stochastic} develops fast algorithms for online AUC maximization with sparse regularization. 
However, all the methods do not take into account the inherent sparsity of data and the per-iteration cost is of $\mathcal{O}(d)$ which is still expensive if $d$ is very large. Moreover, such sparse online AUC optimization algorithms are essentially variants of stochastic proximal gradient algorithms which, as shown in \cite{xiao2010dual,langford2009sparse}, do not produce desired sparse solutions. 

\smallskip 
\noindent{\bf Online learning algorithms with sparsity.\quad} 
A natural approach to obtain sparsity under batch learning setting is to add the $\ell^1$ regularization. However, this is not the case for online setting where only one training sample is available at each time. Indeed, this is shown in \cite{langford2009sparse} that simply applying $\ell^1$ regularization fails to work since the gradient of each sample does not induce sparsity. Many important works~\citep{duchi2009efficient,langford2009sparse,xiao2010dual,duchi2011adaptive,yang2010online} have been proposed to capture problem sparsity. For example, the regularized dual-averaging method proposed in~\cite{xiao2010dual} captures sparsity more effectively. The idea of online dual averaging is based on~\cite{nesterov2009primal} as all gradients used are equally important.

 As demonstrated in these works \cite{mcmahan2010adaptive,mcmahan2011follow,mcmahan2013ad}, the regularized dual averaging can be regarded as a special case of the follow-the-regularized-leader. This type of algorithms is originally proposed in~\cite{Shalev-Shwartz2007} which is based on the follow-the-leader~\cite{hannan1957approximation}. It has been shown that both Regularized Dual Averaging (\textsc{RDA})~\cite{xiao2010dual} and Follow The Proximally Regularized Leader (\textsc{FTPRL})~\cite{mcmahan2010adaptive} can capture the sparsity effectively. However, the FTRL and regularized dual averaging methods are developed for accuracy-based loss depending on individual examples,  while the objective function of AUC maximization is of U-Statistics based on pairs of examples. It remains unclear that how we can incorporate the FTRL framework with the AUC setting to design efficient online AUC optimization methods exploring inherent sparsity of high-dimensional sparse data.   

We aim to design online AUC maximization algorithms which not only achieve much cheaper $O(k)$ per-iteration cost but also generate much sparser solutions than existing methods. By this, we mean the following two requirements.  The first one is that the gradient of loss of each training sample should be sparse. The other one is that the loss needs to be convex with respect to current model variable in order to guarantee a sublinear or logarithmic regret bound.  We will illustrate our new developments in the subsections to  achieve this goal.

\section{Problem Formulation}
\label{sect:problem-formulation}

In this section, we give the definition of AUC score and then define our problem. To this end, let us introduce some notations. 

The training sample at time $t$ is denoted as $\bm x_t \in \mathbb{R}^d$ and the corresponding label is $y_t \in \{+1,-1\}$. The model of a specific algorithm at time $t$ is denoted as $\bm w_t \in \mathbb{R}^d$. Define $\bm x_t^+ :=\bm x_t$ if $y_t = 1$ and $\bm x_t^- :=\bm x_t$ if $y_t = -1$. The $i$-th entries of $\bm x_t$ is denoted by $x_{t,i}$. The indicator function is defined as $\mathbb{1}_{[A]}$ where it takes value 1 if $A$ is true and 0 otherwise. We assume each training sample is $k$-sparse, i.e. $\|\bm x_t\|_0 = \mathcal{O}(k)$ and $k \ll d$.\footnote{$\|\bm x\|_0 := |\{x_i:x_i \ne 0\}|$, the number of non-zeros in $\bm x$. $k\ll d$ is not a hard condition. As long as $k < d$ on average over all training samples, our proposed method can benefit from it accordingly.} Matrices are denoted by bold capitals such as $\bm Q$. The average of all $t$ positive and $t$ negative data samples are denoted respectively by  $\overline{\bm x}_t^+=\frac{\sum_{i=1}^t \bm x_i^+}{t}$  and $\overline{\bm x}_t^-=\frac{\sum_{j=1}^t \bm x_j^-}{t}$.

\subsection{Definition of AUC Optimization}
Given a set of training samples $\mathcal{D} :=\{\bm x_i,y_i\}_{i=1}^T$ where $T=T_+ + T_-$, $T_+$ is the number of positive samples and $T_-$ is the number of negative samples in $\mathcal{D}$, the AUC score~\cite{hanley1982meaning} of a specific linear classifier $\bm w$ is defined as
\begin{small}
\begin{equation}
   \operatorname{AUC}(\bm w) := \frac{1}{T_+ T_-} \sum_{i=1}^{T} \sum_{j=1}^{T} \mathbb{1}_{\left[\bm w^\top \bm x_i \geq \bm w^\top \bm x_j\right]} \mathbb{1}_{\left[y_i = 1\right]} \mathbb{1}_{\left[y_j = -1\right]}. \label{inequ:auc-score}
\end{equation}
\end{small}

\begin{figure}
\centering
\includegraphics[width=6.81cm,height=3.46cm]{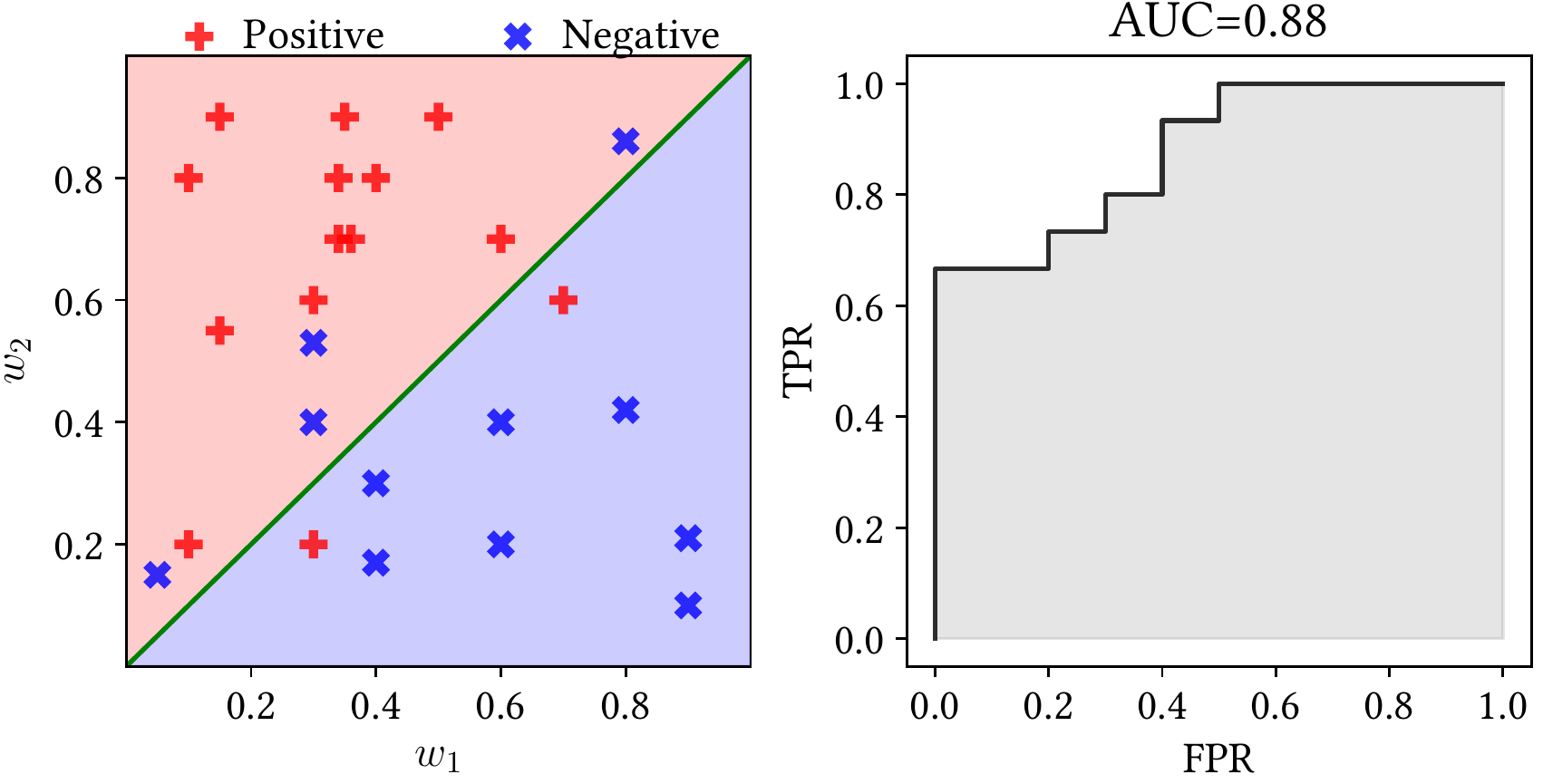}
\caption{A toy example of calculating AUC score}
\vspace{-5mm}
\label{fig:toy-example}
\end{figure}

Intuitively, $\operatorname{AUC}(\bm w)$ defined in~(\ref{inequ:auc-score}) is equivalent to the probability of a positive sample ranked by $\bm w$ higher than negative samples. As illustrated on the left of Figure~\ref{fig:toy-example}, the diagonal line is the linear classifier $\bm w = [-1,1]^\top$, which scores all points in red region positive and all points in blue region negative. By using the true labels of these data samples and these corresponding scores, we can draw the Area Under the ROC curve accordingly (as shown on the right of Figure~\ref{fig:toy-example}), where FPR and TPR are the false positive rate and true positive rate respectively.

To simplify~(\ref{inequ:auc-score}), denote positive samples as $\bm x_1^+,\bm x_2^+,\ldots, \bm x_{T_+}^+$ and negative samples as $\bm x_1^-,\bm x_2^-,\ldots,\bm x_{T_-}^-$ respectively. To maximize~(\ref{inequ:auc-score}), it is equivalent to minimize $1-AUC(\bm w)$, and hence we have the following minimization problem
\begin{equation}
\min_{\bm w\in \mathbb{R}^d} \frac{1}{T_+ T_-} \sum_{i=1}^{T_+} \sum_{j=1}^{T_-} \mathbb{1}_{[\bm w^\top \bm x_i^+ < \bm w^\top \bm x_j^- ]}.\label{inequ:objective-2}
\end{equation}
The inherently combinatorial of indicator function $\mathbb{1}_{[A]}$ makes it difficult to optimize~(\ref{inequ:objective-2}) directly. Thus, we replace the indicator function by a convex loss such as least square~\cite{gao2013one} or hinge loss~\cite{zhao2011online}. We choose the least square loss\footnote{It has been proved in~\cite{gao2013one,gao2015consistency}, least square is consistent with AUC but hinge loss not.} as our surrogate as the following
\begin{equation}
\min_{\bm w\in \mathbb{R}^d} F(\bm w) := \frac{1}{T_+ T_-} \sum_{i=1}^{T_+} \sum_{j=1}^{T_-} \left(1- \bm w^\top \left(\bm x_i^+  - \bm x_j^-\right) \right)^2, \label{inequ:auc-objective}
\end{equation}
where the objective function $F(\bm w)$ is in the form of U-Statistics \cite{cesa2006prediction} depending on pairs of individual examples. 

\subsection{Online AUC optimization}
The standard online learning algorithm is as follows: at each iteration $t$, the learner receives a question $\bm x_t$ and then the learner makes a prediction and a corresponding loss could occur after the true label $y_t$ is available, i.e. $f_t(\bm w_t; \{\bm x_t, y_t\})$. For example, a least square loss can be written as $f_t(\bm w_t; \{\bm x_t, y_t\}):=\left(\bm w_t^\top \bm x_t - y_t\right)^2$. The goal of the online learning is to design an algorithm which aims to minimize the regret as defined in the following
\begin{small}
\begin{equation}
\operatorname{Regret}_{T} := \sum_{t=1}^T f_t(\bm w_t; \{\bm x_t,y_t\}) - \min_{\bm w \in \mathbb{R}^d} \sum_{t=1}^T f_t(\bm w; \{\bm x_t,y_t\})
\label{inequ:regret-t}
\end{equation}
\end{small}
The main difficulty of applying online learning algorithms is that $F(\bm w)$ cannot be seamlessly decomposed into $T$ separable loss functions. Some theoretical results show that when the loss is pairwise loss functions~\cite{Wang2012,kar2013generalization}, it is still possible to do online learning, but existing strategies need to save previous training samples, which could be memory costly and the per-iteration cost is very high. 

To design an efficient online AUC optimization algorithm for high-dimensional sparse data with per iteration cost $\mathcal{O}(k)$, we reformulate the problem \eqref{inequ:auc-objective} as an empirical saddle point (min-max) formulation which is inspired by \cite{ying2016online}.  The proof in \cite{ying2016online} uses the concepts of conditional expectations and we provide a much simpler and straightforward proof.  

\begin{lemma}[Empirical Saddle Point Reformulation]
Minimization problem (\ref{inequ:auc-objective}) is equivalent to the following saddle point problem
\vspace{-5mm}
\begin{equation}
\min_{\substack{\bm w \in \mathbb{R}^d, \\ (a,b) \in \mathbb{R}^{2}}} \max_{\alpha \in \mathbb{R}} \Big\{ \sum_{t=1}^T f_t(\bm w,a,b,\alpha;\{\bm x_t,y_t\})\Big\}, \label{inequ:obj:nips16}
\end{equation}
where each $f_t$ is defined as the following
\vspace{-2mm}
\begin{small}
\begin{align}
f_t&(\bm w,a,b,\alpha;\{\bm x_t, y_t\}) = \left(1-p_T\right) \left(\bm w^\top \bm x_t - a\right)^2 \mathbb{1}_{[y_t = 1]} \nonumber \\
&\quad + p_T \left(\bm w^\top \bm x_t - b \right)^2 \mathbb{1}_{[y_t = -1]} - p_T (1-p_T)\alpha^2 \nonumber\\
&\quad + 2 (1+\alpha)\bm w^\top \bm x_t \Big(p_T\mathbb{1}_{[y_t = -1]} - (1-p_T)\mathbb{1}_{[y_t =1]} \Big),
\label{inequ:each-fi}
\end{align}
\end{small}
where $p_T = \sum_{i=1}^T\mathbb{1}_{[y_i=1]}/T$ estimates the distribution of positive training samples and $a,b,\alpha \in \mathbb{R}$. 
\label{lemma:1}
\end{lemma}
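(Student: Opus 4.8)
The plan is to prove the equivalence by a direct algebraic reduction rather than the conditional-expectation argument of~\cite{ying2016online}: I would optimize out the auxiliary scalars $a,b,\alpha$ from the objective in~(\ref{inequ:obj:nips16}) and show that what remains is a strictly increasing affine function of $F(\bm w)$, so that the two problems share the same set of minimizers in $\bm w$.

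First I would reparametrize in terms of the scalar scores $u_i := \bm w^\top \bm x_i^+$ and $v_j := \bm w^\top \bm x_j^-$. Expanding $(1-u_i+v_j)^2 = 1 - 2(u_i-v_j) + (u_i-v_j)^2$ and averaging over all $T_+T_-$ pairs, every term collapses onto the first two empirical moments of $\{u_i\}$ and $\{v_j\}$; with $\bar u = \tfrac{1}{T_+}\sum_i u_i$, $\bar v = \tfrac{1}{T_-}\sum_j v_j$ and the empirical variances $V_+ = \tfrac{1}{T_+}\sum_i (u_i-\bar u)^2$, $V_- = \tfrac{1}{T_-}\sum_j (v_j-\bar v)^2$, a short computation gives $F(\bm w) = V_+ + V_- + (1-\bar u+\bar v)^2$. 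I would then re-express these three pieces variationally using only two one-line identities: $\min_{c\in\mathbb R}\tfrac1n\sum_i (z_i-c)^2 = \tfrac1n\sum_i (z_i-\bar z)^2$, applied with $z=u$ to introduce $a$ and with $z=v$ to introduce $b$; and $\max_{\alpha\in\mathbb R}\big(-\alpha^2 + 2(1+\alpha)s\big) = (1+s)^2-1$, applied with $s=\bar v-\bar u$ to introduce $\alpha$.

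Next I would match this against $\sum_{t=1}^T f_t$. Substituting $1-p_T = T_-/T$ and $p_T = T_+/T$ into~(\ref{inequ:each-fi}) and summing over $t$, each indicator-weighted sum restricts to the corresponding class and the normalizations line up to give $\sum_{t=1}^T f_t = \tfrac{T_+T_-}{T}\big[\tfrac{1}{T_+}\sum_i(u_i-a)^2 + \tfrac{1}{T_-}\sum_j(v_j-b)^2 - \alpha^2 + 2(1+\alpha)(\bar v-\bar u)\big]$. For any fixed $\bm w$ the bracket is additively separable: $(a,b)$ appear only in the first two terms and $\alpha$ only in the last two, so $\min_{(a,b)}\max_\alpha$ acts on each group independently, and the two identities above turn the bracket into $V_+ + V_- + (1-\bar u+\bar v)^2 - 1 = F(\bm w) - 1$. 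Hence the objective of~(\ref{inequ:obj:nips16}) equals $\tfrac{T_+T_-}{T}\big(F(\bm w)-1\big)$ after optimizing $a,b,\alpha$; since $\tfrac{T_+T_-}{T}>0$ this has the same minimizers over $\bm w$ as $F(\bm w)$, which is the claim. As a byproduct, at a minimizer $\bm w^\star$ the optimal auxiliary values are $a^\star=(\bm w^\star)^\top\overline{\bm x}^+$, $b^\star=(\bm w^\star)^\top\overline{\bm x}^-$, $\alpha^\star=(\bm w^\star)^\top(\overline{\bm x}^- - \overline{\bm x}^+)$, which is the ``positive/negative score'' bookkeeping the algorithm later maintains.

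I do not expect a real mathematical obstacle -- the argument is essentially bookkeeping -- but two points need care. First, the class weights $1-p_T$ versus $p_T$ must be tracked precisely so that summing the indicator terms yields exactly the factor $\tfrac{T_+T_-}{T}$ on both the quadratic and the cross terms; a sign or normalization slip here is what would break the match with $F$. Second, the only reason $\min_{(a,b)}$ and $\max_\alpha$ may be pushed through independently is that, for each fixed $\bm w$, the summed objective splits as an $(a,b)$-part plus an $\alpha$-part with no coupling, so no minimax exchange theorem is actually invoked. I would also state explicitly that ``equivalent'' here means the same set of optimal $\bm w$, with optimal values related through the affine map $F\mapsto\tfrac{T_+T_-}{T}(F-1)$.
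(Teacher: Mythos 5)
Your proof is correct and takes essentially the same route as the paper: expand the pairwise least-squares loss into first and second empirical moments of the positive/negative scores, introduce $a,b$ via the variance identity $\min_c\frac1n\sum_i(z_i-c)^2=\frac1n\sum_i(z_i-\bar z)^2$ and $\alpha$ via a quadratic max, then match normalizations against $\sum_t f_t$ using $p_T=T_+/T$. The only (cosmetic) differences are that you fold the constant and linear terms into the $\alpha$-maximization rather than carrying them separately, and you are more explicit than the paper about the fact that the equivalence is up to the positive affine rescaling $F\mapsto\frac{T_+T_-}{T}(F-1)$ and about why the $\min$ over $(a,b)$ and $\max$ over $\alpha$ decouple without any minimax exchange.
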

\begin{proof}
The original minimization objective is the following
\begin{scriptsize}
\begin{align*}
F(\bm w) &= \frac{1}{T_+ T_-} \sum_{i=1}^{T_+} \sum_{j=1}^{T_-} \left(1- \bm w^\top \left(\bm x_i^+  - \bm x_j^-\right) \right)^2 \\
&= 1 - \frac{2}{T_+}\sum_{i=1}^{T_+}\bm w^\top \bm x_i^+ + \frac{2}{T_-}\sum_{j=1}^{T_-}\bm w^\top \bm x_j^- + \frac{1}{T_+} \sum_{i=1}^{T_+}\left(\bm w^\top \bm x_i^+\right)^2 \\
&\quad + \frac{1}{T_-} \sum_{j=1}^{T_-}\left(\bm w^\top \bm x_j^-\right)^2 - \frac{2}{T_+ T_-} \sum_{i=1}^{T_+}\sum_{j=1}^{T_-}\left(\bm w^\top \bm x_i^+\right)\left(\bm w^\top \bm x_j^-\right) \\
&= 1 - \frac{2}{T_+}\sum_{i=1}^{T_+}\bm w^\top \bm x_i^+ + \frac{2}{T_-}\sum_{j=1}^{T_-}\bm w^\top \bm x_j^- \\
&\quad + \underbrace{\left\{\frac{1}{T_+} \sum_{i=1}^{T_+}\left(\bm w^\top \bm x_i^+\right)^2 - \left(\frac{1}{T_+} \sum_{i=1}^{T_+}\bm w^\top \bm x_i^+\right)^2 \right\}}_{\bm A} \\
&\quad + \underbrace{ \left\{\frac{1}{T_-} \sum_{j=1}^{T_-}\left(\bm w^\top \bm x_j^-\right)^2 - \left(\frac{1}{T_-} \sum_{j=1}^{T_-}\bm w^\top \bm x_j^-\right)^2 \right\}}_{\bm B} \\
&\quad + \underbrace{\left(\frac{1}{T_+} \sum_{i=1}^{T_+}\bm w^\top \bm x_i^+ - \frac{1}{T_-}\sum_{j=1}^{T_-} \bm w^\top \bm x_j^- \right)^2}_{\bm C}.
\vspace{-8mm}
\end{align*}
\end{scriptsize}
\vspace{-2mm}
For items $\bm A, \bm B, $ and $\bm C$, we can reformulate them as
\begin{scriptsize}
\begin{align*}
\bm A &= \min_{a \in \mathbb{R}} \frac{1}{T_+} \sum_{i=1}^{T_+} \left(\bm w^\top \bm x_i^+ - a \right) ^2, \bm B = \min_{b \in \mathbb{R}} \frac{1}{T_-} \sum_{j=1}^{T_-} \left(\bm w^\top \bm x_j^- - b \right) ^2, \\
\bm C &= \max_{\alpha \in \mathbb{R}} \left\{ 2\alpha\left(\frac{1}{T_-}\sum_{j=1}^{T_-}\bm w^\top \bm x_j^- - \frac{1}{T_+}\sum_{i=1}^{T_+}\bm w^\top \bm x_i^+\right) - \alpha^2\right\}.
\end{align*}
\end{scriptsize}
Hence, we can rewrite $F(\bm w)$ as the following
\begin{scriptsize}
\begin{align*}
\begin{split}
F(\bm w) &= 1 - \frac{2}{T_+}\sum_{i=1}^{T_+}\bm w^\top \bm x_i^+ + \frac{2}{T_-}\sum_{j=1}^{T_-}\bm w^\top \bm x_j^- \\
&\quad+\min_{(a,b) \in \mathbb{R}^2} \left\{ \frac{1}{T_+} \sum_{i=1}^{T_+} \left(\bm w^\top \bm x_i^+ - a \right) ^2 + \frac{1}{T_-} \sum_{j=1}^{T_-} \left(\bm w^\top \bm x_j^- - b \right) ^2 \right\}\\
&\quad+\max_{\alpha \in \mathbb{R}} \left\{ 2\alpha\left(\frac{1}{T_-}\sum_{j=1}^{T_-}\bm w^\top \bm x_j^- - \frac{1}{T_+}\sum_{i=1}^{T_+}\bm w^\top \bm x_i^+\right) - \alpha^2\right\}.
\end{split}
\end{align*}
\end{scriptsize}
Finally, the minimization of $F(\bm w)$ can be expressed as the sum of separable loss functions depending on each single training sample pair as the following
\begin{small}
\begin{align*}
&\min_{\bm w} F(\bm w) = \min_{\substack{\bm w \in \mathbb{R}^d, \\(a,b)\in \mathbb{R}^{2}}} \max_{\alpha \in \mathbb{R}} \Bigg\{ 1 - \frac{2}{T_+}\sum_{i=1}^{T_+}\bm w^\top \bm x_i^+ + \frac{2}{T_-}\sum_{j=1}^{T_-}\bm w^\top \bm x_j^- \\
&\quad+  \frac{1}{T_+} \sum_{i=1}^{T_+} \left(\bm w^\top \bm x_i^+ - a \right) ^2 + \frac{1}{T_-} \sum_{j=1}^{T_-} \left(\bm w^\top \bm x_j^- - b \right) ^2 \\
&\quad+  2\alpha\left(\frac{1}{T_-}\sum_{j=1}^{T_-}\bm w^\top \bm x_j^- - \frac{1}{T_+}\sum_{i=1}^{T_+}\bm w^\top \bm x_i^+\right) - \alpha^2 \Bigg\} \\
&= \min_{\substack{\bm w \in \mathbb{R}^d, \\(a,b)\in \mathbb{R}^{2}}} \max_{\alpha \in \mathbb{R}} \sum_{t=1}^T \Bigg\{ \frac{1}{T} - \frac{2}{T_+}\bm w^\top \bm x_t \mathbb{1}_{[y_t=1]} + \frac{2}{T_-} \bm w^\top \bm x_t\mathbb{1}_{[y_t=-1]} \\
&\quad+  \frac{1}{T_+}\left(\bm w^\top \bm x_t - a \right)^2\mathbb{1}_{[y_t=1]} + \frac{1}{T_-}\left(\bm w^\top \bm x_t - b \right) ^2\mathbb{1}_{[y_t=-1]} \\
&\quad+  2\alpha\left(\frac{1}{T_-}\bm w^\top \bm x_t\mathbb{1}_{[y_t=-1]} - \frac{1}{T_+} \bm w^\top \bm x_t\mathbb{1}_{[y_t=1]}\right) - \frac{\alpha^2}{T} \Bigg\} 
\end{align*}
\end{small}
Normalize the right hand side above by $\frac{T_+ T_-}{T}$ and notice that $p_T = T_+ /T$ and $1-p_T = T_-/T$, we finally arrive at the following equivalent optimization problem
\begin{equation}
\min_{\substack{\bm w \in \mathbb{R}^d,\\(a,b) \in \mathbb{R}^2}} \max_{\alpha \in \mathbb{R}} \sum_{t=1}^T \Big\{ f_t(\bm w,a,b,\alpha; x_t,y_t) \Big\}. \nonumber
\end{equation}
\end{proof}

Lemma~\ref{lemma:1} decomposes the original AUC objective function into the sum of separable loss by introducing three auxiliary variables $(a, b, \alpha)$.  However, the algorithm  in~\cite{ying2016stochastic} needs to update $\bm w, a$, and $b$ by using gradient descent and to update $\alpha$ by using gradient ascent. More importantly, the $\bm w_t$ at iteration $t$ needs to project back to a $\ell^2$-norm ball, which requires per-iteration cost  $\mathcal{O}(d)$. A recent work~\cite{natole2018stochastic} shows that, it is not necessary to update $a, b, $ and $\alpha$ and optimization problem~(\ref{inequ:obj:nips16}) can be further reformulated as 
\begin{equation}
\min_{\substack{\bm w \in \mathbb{R}^d}} \Big\{ \sum_{t=1}^T f_t(\bm w,a(\bm w),b(\bm w),\alpha(\bm w);\{\bm x_t,y_t\})\Big\}, \label{inequ:obj:icml18}
\end{equation}
where $a(\bm w) = \bm w^\top \overline{\bm x}_{T_+}^+$, $b(\bm w) = \bm w^\top \overline{\bm x}_{T_-}^-$, and $\alpha(\bm w)=b(\bm w) - a(\bm w)$. From now on, we denote $f_t(\bm w,a(\bm w),b(\bm w),\alpha(\bm w);\{\bm x_t,y_t\})$ as $f_t(\bm w)$. The gradient of each $f_t(\bm w)$ above is
\begin{small}
\begin{align*}
\begin{split}
&\nabla f_t(\bm w) = 2(1-p_T)\left(\bm w^\top \bm x_t - \bm w^\top \overline{\bm x}_{T_+}^+\right)\left(\bm x_t - \overline{\bm x}_{T_+}^+\right)\mathbb{1}_{[y=1]} \\
&\quad + 2p_T(\bm w^\top \bm x_t - \bm w^\top \overline{\bm x}_{T_-}^-)(\bm x_t - \overline{\bm x}_{T_-}^-)\mathbb{1}_{[y=-1]} \\
&\quad - 2p_T(1-p_T)\left(\bm w^\top\overline{\bm x}_{T_-}^- - \bm w^\top\overline{\bm x}_{T_+}^+ \right)\left(\overline{\bm x}_{T_-}^- - \overline{\bm x}_{T_+}^+\right) \\
&\quad + 2(1+\bm w^\top \overline{\bm x}_{T_-}^- - \bm w^\top\overline{\bm x}_{T_+}^+)(p_T\mathbb{1}_{[y_t = -1]} - (1-p_T)\mathbb{1}_{[y_t =1]})\bm x_t \\
&\quad + 2\bm w^\top \bm x_t (p_T\mathbb{1}_{[y_t = -1]} - (1-p_T)\mathbb{1}_{[y_t =1]})( \overline{\bm x}_{T_-}^- - \overline{\bm x}_{T_+}^+).
\end{split}
\end{align*}
\end{small}

However, there are two disadvantages to apply~(\ref{inequ:obj:icml18}) to the online learning setting: 1) the estimators $p_T,\overline{\bm x}_{T_+}^+$ and $\overline{\bm x}_{T_-}^-$ need to be known, which is unrealistic in the online learning setting; 2) $f_t(\bm w)$ is not convex as proved in~\cite{ying2019spauc} and the gradient of $f_t(\bm w)$ with respect to $\bm w$ is not sparse in general. The gradient is non-sparse because it needs to calculate both $\bm w^\top \overline{\bm x}_{T_+}^+$ and $\bm w^\top \overline{\bm x}_{T_-}^-$. These vectors are not sparse and need $\mathcal{O}(d)$ cost in general. In the subsequent section, we will introduce several novel techniques to resolve these issues. 

\section{Proposed Algorithm: \textsc{FTRL-AUC}}
\label{sect:proposed-algorithm}

\subsection{Problem reformulation}
\label{sect:4:sparse-gradient}
The difficultly of getting a sparse gradient motivates us to find an alternative way. Here is the main idea to overcome this obstacle.   At each iteration $t$, instead of using   $f_t(\bm w_t)$, we design a new convex loss $\widehat{f}_t(\bm w_t)$ by replacing the term $a(\bm w_t) = \bm w_t^T \overline{\bm x}_{T_+}^+$ by $a_t(\bm w_{t-1}) = \bm w_{t-1}^T \overline{\bm x}_{t_+}^+$, $b(\bm w_t) = \bm w_t^T \overline{\bm x}_{T_-}^-$ by $b_t(\bm w_{t-1}) = \bm w_{t-1}^T \overline{\bm x}_{t_-}^-$, and $p_T$ by $p_t$, where $\overline{\bm x}_{t_+}^+$ $ \overline{\bm x}_{t_-}^- $, and $p_t$ are the current available estimators at time $t$. The precise form of this new function  $\widehat{f}_t(\bm w_t)$ is described in the following theorem. 

\begin{theorem}
Define the loss of sample $\{\bm x_t,y_t\}$ as the following
\begin{small}
\begin{align}
\widehat{f}_t &(\bm w_t;\{\bm x_t,y_t\}) := \left(1-p_t\right) \left(\bm w_t^\top \bm x_t - a_t(\bm w_{t-1})\right)^2 \mathbb{1}_{[y_t =1]} \nonumber \\
&+ p_t \left(\bm w_t^\top \bm x_t - b_t(\bm w_{t-1}) \right)^2 \mathbb{1}_{[y_t = -1]} - p_t (1-p_t)(\alpha_t(\bm w_{t-1}))^2 \nonumber\\
&+ 2 (1+\alpha_t(\bm w_{t-1}))\bm w_t^\top \bm x_t \Big(p_t\mathbb{1}_{[y_t = -1]} - (1-p_t)\mathbb{1}_{[y_t =1]} \Big), \label{inequ:new-ft}
\end{align}
\end{small}
where $p_t = \frac{\sum_{i=1}^t \mathbb{1}_{[y_i = 1]}}{t}, \alpha_t(\bm w_{t-1}) = b_t(\bm w_{t-1}) - a_t(\bm w_{t-1})$
\begin{small}
\begin{align*}
a_t(\bm w_{t-1}) &= \bm w_{t-1}^\top \frac{\sum_{i=1}^t \bm x_i\mathbb{1}_{[y_i = 1]}}{t}, \text{ and }\\ 
b_t(\bm w_{t-1}) &= \bm w_{t-1}^\top\frac{\sum_{i=1}^t \bm x_i\mathbb{1}_{[y_i = -1]}}{t}.
\end{align*}
\end{small}
We denote the above definition as $\widehat f_t(\bm w_t)$ and it is convex with respect to $\bm w_t$. If $\|\bm x_t\|_0 = \mathcal{O}(k)$, then gradient of $\widehat{f}_t$ with respect to $\bm w_t$ is also a $k$ sparse vector, i.e., $\|\nabla \widehat{f}_t(\bm w_t)\|_0 = \mathcal{O}(k)$.
\label{thm:1}
\end{theorem}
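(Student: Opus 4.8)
The plan is to establish the two claims of Theorem~\ref{thm:1} separately: first convexity of $\widehat f_t(\bm w_t)$ in $\bm w_t$, then the $\mathcal{O}(k)$-sparsity of its gradient. The crucial observation enabling both is that in $\widehat f_t$, the quantities $a_t(\bm w_{t-1})$, $b_t(\bm w_{t-1})$, $\alpha_t(\bm w_{t-1})$, and $p_t$ are all \emph{constants} with respect to the optimization variable $\bm w_t$ — they depend only on the previously committed iterate $\bm w_{t-1}$ and on the data seen up to time $t$. This is exactly the point of the ``lazy update'' reformulation, and it is what separates $\widehat f_t$ from the original non-convex $f_t(\bm w)$ of~\eqref{inequ:obj:icml18}, where $a(\bm w), b(\bm w), \alpha(\bm w)$ move with $\bm w$ and create the cross-terms responsible for non-convexity.

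For convexity, I would treat the two cases $y_t = 1$ and $y_t = -1$ separately (only one indicator is active). When $y_t = 1$, write $c := a_t(\bm w_{t-1})$ and $\alpha := \alpha_t(\bm w_{t-1})$; then $\widehat f_t(\bm w_t) = (1-p_t)(\bm w_t^\top \bm x_t - c)^2 - p_t(1-p_t)\alpha^2 - 2(1+\alpha)(1-p_t)\,\bm w_t^\top \bm x_t$. As a function of $\bm w_t$ this is a quadratic in the scalar $\bm w_t^\top \bm x_t$ with leading coefficient $1-p_t \ge 0$, plus an affine term and a constant; composing a one-dimensional convex function with the linear map $\bm w_t \mapsto \bm w_t^\top \bm x_t$ preserves convexity, and adding an affine function and a constant does too. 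The case $y_t = -1$ is symmetric with leading coefficient $p_t \ge 0$. Hence $\widehat f_t$ is convex in $\bm w_t$ in both cases. (One should note $0 \le p_t \le 1$, which is immediate from its definition as a ratio of a count to $t$.)

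For the sparsity claim, I would compute $\nabla \widehat f_t(\bm w_t)$ directly. In the case $y_t = 1$, differentiating the expression above gives
\begin{small}
\begin{equation*}
\nabla \widehat f_t(\bm w_t) = 2(1-p_t)\big(\bm w_t^\top \bm x_t - a_t(\bm w_{t-1})\big)\bm x_t - 2(1+\alpha_t(\bm w_{t-1}))(1-p_t)\,\bm x_t,
\end{equation*}
\end{small}
which is a scalar multiple of $\bm x_t$; the symmetric computation for $y_t = -1$ again yields a scalar multiple of $\bm x_t$. Since $\|\bm x_t\|_0 = \mathcal{O}(k)$, any scalar multiple of $\bm x_t$ has at most the same support, so $\|\nabla \widehat f_t(\bm w_t)\|_0 = \mathcal{O}(k)$. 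The key point to emphasize is \emph{why} this differs from the gradient of $f_t(\bm w)$ displayed earlier: there, terms like $\bm w^\top \overline{\bm x}^+_{T_+}$ involve derivatives of $a(\bm w)$, producing directions $\overline{\bm x}^+_{T_+}$ and $\overline{\bm x}^-_{T_-}$ which are dense; freezing $a_t, b_t, \alpha_t$ at $\bm w_{t-1}$ removes precisely those dense directions.

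I do not anticipate a genuine obstacle here — both parts are short, essentially bookkeeping once the ``freeze the auxiliary variables'' idea is in place. The only mild subtlety worth being careful about is making the case split on $y_t$ clean and confirming that the constant term $-p_t(1-p_t)\alpha_t(\bm w_{t-1})^2$ (which has zero gradient and does not affect convexity) is correctly handled; and noting explicitly that the scalar coefficients in $\nabla \widehat f_t$ can be computed in $\mathcal{O}(k)$ time as well, since $\bm w_t^\top \bm x_t$, $a_t(\bm w_{t-1})$ and $b_t(\bm w_{t-1})$ are inner products that, with the lazy-update maintenance of the running averages, cost $\mathcal{O}(k)$ — this ties the sparsity statement to the per-iteration complexity claim that motivates the whole construction.
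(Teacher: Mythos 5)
Your proposal is correct and follows essentially the same route as the paper: the paper verifies convexity by noting that the Hessian is $\left(2(1-p_t)\mathbb{1}_{[y_t=1]} + 2p_t\mathbb{1}_{[y_t=-1]}\right)\bm x_t\bm x_t^\top$, a rank-one positive semi-definite matrix, which is the same fact you obtain via the quadratic-in-$\bm w_t^\top\bm x_t$ composition argument, and it likewise establishes sparsity by computing $\nabla\widehat f_t(\bm w_t)$ explicitly as a scalar multiple of $\bm x_t$ (your unsimplified gradient for $y_t=1$ collapses, using $\alpha_t = b_t - a_t$, to the paper's $2(1-p_t)(\bm w_t^\top\bm x_t - b_t(\bm w_{t-1}) - 1)\bm x_t$). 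No gaps.
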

\begin{proof}
Clearly, the convexity of $\widehat{f}_t(\bm w_t)$ can be checked by the Hessian matrix of $\widehat{f}_t(\bm w_t)$ with respect to $\bm w_t$, i.e,
\begin{equation}
H_{\widehat{f}_t}(\bm w_t) : = \left(2(1-p_t)\mathbb{1}_{[y_t=1]} + 2p_t\mathbb{1}_{[y_t=-1]}\right)\bm x_t\bm x_t^\top. \nonumber
\end{equation}
Notice that the Hessian matrix $H_{\widehat{f}_t}(\bm w_t)$ is a rank-one matrix with a nonnegative coefficient, and hence it is a positive semi-definite matrix. Therefore, $\widehat{f}_t$ is convex. The gradient of $\widehat{f}_t(\bm w_t)$ with respect to $\bm w_t$ can be calculated as the following
\begin{equation}
\nabla \widehat{f}_t(\bm w_t) = 
\begin{cases}
2(1-p_t)\left(\bm w_t^\top \bm x_t - b_t(\bm w_{t-1})-1\right)\bm x_t & y_t = 1 \\
2p_t\left(\bm w_t^\top \bm x_t - a_t(\bm w_{t-1})+1\right)\bm x_t & y_t = -1 
\end{cases}
\label{inequ:sparse-gradient}
\end{equation}
Notice that $\nabla \widehat{f}_t(\bm w_t)$ is a scaling of $\bm x_t$, hence, if $\|\bm x_t\|_0 = \mathcal{O}(k)$, then $\nabla \widehat{f}(\bm w_t)$ is also $k$ sparse.
\vspace{-3mm}
\end{proof}

Although the gradient of $\widehat{f}_t(\bm w_t)$ is sparse, it still needs to have $\mathcal{O}(d)$ cost by observing that~(\ref{inequ:sparse-gradient}) needs to calculate $a_t(\bm w_{t-1})$ and $b_t({\bm w}_{t-1})$. In Section~\ref{sect:4:sparse}, we show that it is possible to obtain a sparse gradient of $\widehat{f}_t(\bm w_t)$ by using a ``lazy update'' trick in $\mathcal{O}(k)$ time.

\subsection{Pursuing \texorpdfstring{$\mathcal{O}(k) $ }{Ok} per-iteration cost}
\label{sect:4:sparse}
Theorem~\ref{thm:1} demonstrates that it is possible to calculate the gradient in $\mathcal{O}(k)$ per-iteration. However, the major difficulty is that we need to estimate the expectation of empirical score for positive and negative sample with respect to $\bm w$ up to $t$ as the following
\begin{small}
\begin{equation}
a_t(\bm w) = \bm w^\top \frac{\sum_{i=1}^t \bm x_i\mathbb{1}_{[y_i = 1]}}{t},  b_t(\bm w) = \bm w^\top \frac{\sum_{i=1}^t \bm x_i\mathbb{1}_{[y_i = -1]}}{t}. \nonumber
\end{equation}
\end{small}
Given any $\bm w_{t-1}$, we can rewrite the above equations $a_t(\bm w_{t-1})$ and $b_t(\bm w_{t-1})$ as the following
\begin{equation}
a_t(\bm w_{t-1}) = \bm w_{t-1}^\top \frac{\sum_{i=1}^{t_+} \bm x_i^+}{t_+},\quad b_t(\bm w_{t-1}) = \bm w_{t-1}^\top \frac{\sum_{j=1}^{t_-} \bm x_j^-}{t_-}, \nonumber
\end{equation}
where $t = t_+ + t_-$. Take $a_t(\bm w_{t-1})$ as an example. We need to obtain the estimation of $a_t(\bm w_{t-1})$ where we denote it as $\widehat{a}_t(\bm w_{t-1})$. We reformulate the above equation and rewrite it as the following
\begin{align*}
&\widehat{a}_t(\bm w_{t-1}) := \bm w_{t-1}^\top \frac{\sum_{i=1}^{t_+-1} \bm x_i+\bm x_{t_+}^+}{t_+} \\
&= \frac{t_+ - 1}{t_+}\bm w_{t-1}^\top \frac{1}{t_+ - 1}\sum_{i=1}^{t_+-1} \bm x_i^+ +\frac{\bm w_{t-1}^\top \bm x_{t_+}^+}{t_+} \\
&\bm{\approx} \frac{t_+ - 1}{t_+}\bm w_{t-2}^\top \frac{1}{t_+ - 1}\sum_{i=1}^{t_+-1} \bm x_i^+ +\frac{\bm w_{t-1}^\top \bm x_{t_+}^+}{t_+}\ \text{(\textbf{Lazy update})} \\  &= \frac{t_+ - 1}{t_+} \widehat{a}_{t-1}(\bm w_{t-2}) +\frac{\bm w_{t-1}^\top \bm x_{t_+}^+}{t_+}.  
\end{align*}
Similarly, $\widehat{b}_t(\bm w_{t-1})$ can be done in the same way above. We propose to update $a_t(\bm w_{t-1}), b_t(\bm w_{t-1})$ by their estimations $\widehat{a}_t(\bm w_{t-1})$ and $\widehat{b}_t(\bm w_{t-1})$ as the following
\vspace{-2mm}
\begin{align*}
 \widehat{a}_t(\bm w_{t-1}) &= \frac{t_+ - 1}{t_+} \widehat{a}_{t-1}(\bm w_{t-2}) +\frac{\bm w_{t-1}^\top \bm x_{t_+}^+}{t_+},\\
 \widehat{b}_t(\bm w_{t-1}) &= \frac{t_- - 1}{t_-} \widehat{b}_{t-1}(\bm w_{t-2}) +\frac{\bm w_{t-1}^\top \bm x_{t_-}^-}{t_-}.
\end{align*}
Clearly, per-iteration costs of $\widehat{a}_t(\bm w_{t-1})$ and $\widehat{b}_t(\bm w_{t-1})$ are $\mathcal{O}(k)$. 

\begin{figure}
\centering
\includegraphics[width=8cm,height=3.33cm]{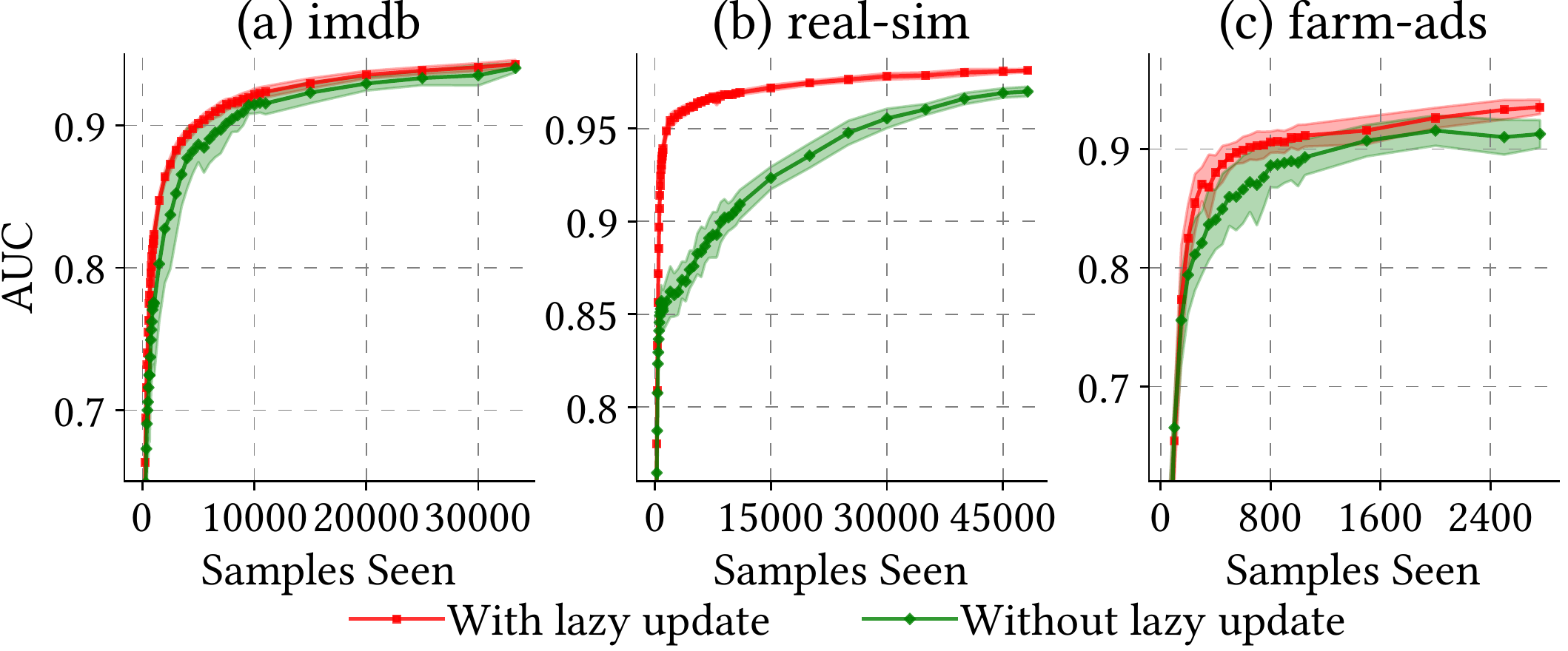}
\caption{Comparison between with and without lazy update}
\label{fig:compare-update-rule}
\end{figure}

One may argue, algorithms without lazy update, as usually expected and taken-for-granted, will outperform those with lazy update. However, in our proposed algorithm, things are just the opposite. As a preliminary study, we apply both the gradient with and without lazy update step to our algorithm and test the two methods on three collected real-world datasets, imdb, real-sim, and farm-ads. As shown in Figure~\ref{fig:compare-update-rule}, the results by using the lazy update rule are surprisingly better and more stable than those without using it. It means that algorithm with lazy update rule could converge faster than the one without lazy update rule, especially at the early stage of the online learning processing. This could be due to the fact that, although $\bm w_t$ does not well converge to the true model $\bm w^*$ at the early stage, the estimator $\widehat{a}_t(\bm w_{t-1})$ and $\widehat{b}_t(\bm w_{t-1})$ which take the average of all previous models, i.e., $\sum_{i=1}^t \bm w_i/t$, will lead to a more stable behavior.

\subsection{Sparsity-pursuing model}

In this sub-section, we assume the gradient $\nabla \widehat{f}_i(\bm w_i)$ is the gradient by replacing $a_t(\bm w_{t-1})$ and $b_t(\bm w_{t-1})$ with $\widehat{a}_t(\bm w_{t-1})$ and $\widehat{b}_t(\bm w_{t-1})$ respectively. To make our model sparse, we apply the generalized Follow-The-Regularized-Leader framework proposed in a series of works~\cite{mcmahan2010adaptive,mcmahan2011follow,mc2017}. For each time $t$, we propose to update the model $\bm w_t$ as 
\begin{align}
\bm w_{t+1} &= \argmin_{\bm w} \Big\langle \sum_{i=1}^t\nabla \widehat{f}_i(\bm w_i), \bm w \Big\rangle + t \lambda \| \bm w \|_1 \nonumber\\
&\quad + \frac{1}{2} \sum_{i = 1}^t \left\|\bm Q_t^{1/2} \left(\bm w - \bm w_i \right)\right\|_2^2, \label{update}
\end{align}
where $\bm Q_t:= \operatorname{diag}(q_{t,1},q_{t,2},\ldots,q_{t,d})$ is a positive definite matrix to control the adaptivity of regularizers and corresponding learning rate. This regularization term makes the updated model more stable~\cite{shalev2012online}.  We adopt the learning rate strategy suggested in~\cite{mcmahan2011follow} where $\eta_{t,i} = \frac{1}{\sum_{j=1}^t q_{j,i}}$. To pursue sparsity, we add $\ell^1$-norm with parameter $\lambda$. The above minimization problem has the following closed-form solution
\begin{equation}
\bm w_{t+1} = -\eta_t \sign(\bm z_t)[|\bm z_t| - \lambda]_+, \nonumber
\end{equation}
where all operations are element-wise. More specifically, the $i$-th coordinate, $\bm z_t$ can be calculated recursively as the following
\begin{equation}
\bm z_{t,i} = \bm z_{t-1,i} + \bm \nabla \widehat{f}_{t,i}(\bm w_{t,i}) + \left(\frac{1}{\eta_{t,i}} - \frac{1}{\eta_{t-1,i}}\right) \bm w_{t,i}, \nonumber
\end{equation}
where the per-coordinate learning rate is $\eta_{t,i} = \frac{\gamma}{1+\sqrt{\sum_{j=1}^t \nabla \widehat{f}_{j,i}(\bm w_{j,i})^2}}$. Again, according to the previous analysis in Section~\ref{sect:4:sparse-gradient} and ~\ref{sect:4:sparse}, the gradient is $k$ sparse and can be updated by using $\mathcal{O}(k)$ time per-iteration. Interestingly, by using the above learning rate schedule, we do not need to update $\bm w_{t+1,i}$ of the $i$-th coordinate whenever $\bm x_{t,i} = 0$. Notice that $\nabla \widehat{f}_{t,i}(\bm w_{t,i}) = 0$ if $\bm x_{t,i} = 0$ because the gradient is the scaling vector of $\bm x_t$. Furthermore, we do not need to update ${\bm z_{t,i}}$ either if $i$-th entry of $\bm x_t$ is 0 by noticing the following equation
\begin{small}
\begin{align*}
\frac{1}{\eta_{t,i}} - \frac{1}{\eta_{t-1,i}} & = \frac{\sqrt{\sum_{j=1}^t \nabla \widehat{f}_{j,i}(\bm w_{j,i})^2} - \sqrt{\sum_{j=1}^{t-1} \nabla \widehat{f}_{j,i}(\bm w_{j,i})^2}}{\gamma}\\
&= \frac{\sqrt{\sum_{j=1}^{t-1} \nabla \widehat{f}_{j,i}(\bm w_{j,i})^2 + \nabla \widehat{f}_{t,i}(\bm w_{t,i})^2}}{\gamma} \\
&\quad- \frac{\sqrt{\sum_{j=1}^{t-1} \nabla \widehat{f}_{j,i}(\bm w_{j,i})^2}}{\gamma} = 0,
\end{align*}
\end{small}
where the last equation due to $\nabla \widehat{f}_{t,i}(\bm w_{t,i}) = 0$. It means that $\bm z_t$ is also $k$-sparse, which could update $\bm w_{t+1}$ in $\mathcal{O}(k)$.

\begin{algorithm}[H]
\caption{\small Follow-The-Regularized-Leader AUC (FTRL-AUC)}
\begin{algorithmic}[1]
\STATE \textbf{Input}: $\gamma,\lambda$
\STATE {\small $t_+ = 0, t_- = 0, p_0 = 0, \bm w_0 = \bm 0, \widehat{a}_0(\bm w_{-1}) = 0, \widehat{b}_0(\bm w_{-1}) = 0$}
\STATE $\bm z = \bm 0, \bm v = \bm 0, \bm w_0 = \bm 0$
\FOR{$t= 0, 1, 2, \ldots$}
\STATE receive $\{{\bm x}_t,y_t\}$
\STATE predict the score $\bm w_t^\top \bm x_t$ 
\IF{$y_t = 1$}
\STATE ${\bm g}_t = 2(1-p_t)(\bm w_t^\top \bm x_t - \widehat{b}_t(\bm w_{t-1}) - 1 )\cdot\bm x_t$  
\STATE $p_{t+1} = \frac{t}{t+1}p_t + \frac{1}{t+1}$
\STATE $t_+ = t_+ + 1$
\STATE $\widehat{a}_{t+1}(\bm w_t) = \frac{t_+ - 1 }{t_+} \widehat{a}_t(\bm w_{t-1}) + \frac{\bm w_t^\top \bm x_t}{t_+}$ 
\ELSE
\STATE ${\bm g}_t = 2p_t(\bm w^\top \bm x_t - \widehat{a}_t(\bm w_{t-1}) + 1)\bm x_t$ 
\STATE $p_{t+1} = \frac{t }{t + 1}p_t$ 
\STATE $t_- = t_- + 1$
\STATE $\widehat{b}_{t+1}(\bm w_t) = \frac{t_- - 1 }{t_-} \widehat{b}_t(\bm w_{t-1}) + \frac{\bm w_t^\top \bm x_t}{t_-}$
\ENDIF
\FOR{$i = \{j: x_{t,j} \ne 0, j \in \{1,2,\ldots,d\}\}$}
\STATE $ w_{t+1,i} = -\frac{\gamma}{1+\sqrt{v_i}}\sign(z_i)\left[|z_i|-\lambda\right]_+$
\STATE $z_i = z_i + g_{t,i} - \frac{1}{\gamma}( \sqrt{v_i+g_{t,i}^2} - \sqrt{v_i}) w_{t,i}$
\STATE $v_i = v_i + g_{t,i}^2 $
\ENDFOR
\ENDFOR
\end{algorithmic}\label{alg:ftrl-auc}
\end{algorithm}

To summarize, the proposed algorithm \textsc{FTRL-AUC} is presented in Algorithm~\ref{alg:ftrl-auc}. It has two input parameters, the initial learning rate $\gamma$ and sparse regularization ($\ell^1$-norm) parameter $\lambda$. At the beginning, it initializes $p_0 = 0, \bm w_0 = \bm 0, \widehat{a}_0(\bm w_{-1}) = 0, \widehat{b}_0(\bm w_{-1}) = 0$. At each time $t$, executes the following two main steps to update $\bm w_t$ :
\begin{itemize}[leftmargin=*]
\item At each time $t$, it receives a sample $\bm x_t$ in Line 5. The gradient of $\widehat{f}_t(\bm w_t)$ is $\bm g_t:= \nabla \widehat{f}_t(\bm w_t)$ as shown in~(\ref{inequ:sparse-gradient}) which is calculated in Line 8 if $y_t$ is positive; otherwise in Line 13. We use the lazy update steps to update $\widehat{a}_t(\bm w_{t-1})$ and $\widehat{b}_t(\bm w_{t-1})$ in Line 11 and 16 respectively. The proportion of positive samples $p_t$ is updated in Line 9 and Line 14. Overall, it costs $\mathcal{O}(2k)$ from Line 5 to Line 17.
\item After taking the gradient of current training sample $\bm x_t$. The $\bm w_t$ will be updated accordingly in Line 19. All previous gradients are accumulated in vector $\bm z$ in Line 20. From Line 18 to Line 22, it only needs $\mathcal{O}(k)$.
\end{itemize}

\subsection{Complexity and Regret Discussion}

\textbf{Time and space complexity.\quad} The time complexity of the per-iteration as claimed is $\mathcal{O}(k)$. It needs $\mathcal{O}(k)$ operations to calculate the gradient from Line 7 to Line 18 and to update model $\bm w_t$. Other parameters such as $\bm v,\bm z$ and $\bm g_t$ also need $O(k)$ operations. Hence, the total is $\mathcal{O}(k)$. Furthermore, \textsc{FTRL-AUC} is also space efficient. During the learning process, it only keeps track of four vectors, $\bm w_t,\bm z, \bm v$, and $\bm g_t$. Hence, the memory requirement is $\mathcal{O}(d)$.

\noindent\textbf{Regret discussion.\quad} We close this section with a brief discussion on the possibility of deriving regret bounds of our algorithm. The regret for the standard Follow-The-Regularized-Leader approach can be found in \cite[Theorem 2]{mc2017}. However, the results there do not directly apply to our case due to two main reasons: 1) the boundedness of $\bm w_{t}$ and $\widehat{f}_t(\bm w_t)$ are not obvious the update \eqref{update} is unconstrained; and 2) $\widehat{a}_t(\bm w_{t-1})$ and $\widehat{b}_t(\bm w_{t-1})$ are two estimators in which there is a gap between the approximated and true ones. Yet, we may still be able to establish the sub-linear regret bounds for our algorithms using the following ideas. 
Firstly, the uniform boundedness of $\|\bm w\|_2 $ and $\|\nabla \widehat{f}_t(\bm w)\|$ can be proved by carefully analyzing the equation \eqref{update} as the regularization term $\|\bm w\|_1$ will enforce the boundeness of the parameter $\bm w.$ Secondly,  one can use concentration inequalities to show that the approximate estimators $\widehat{a}_t(\bm w_{t-1})$ and $\widehat{b}_t(\bm w_{t-1})$ differ from their true estimators ${a}_t(\bm w_{t-1})$ and ${b}_t(\bm w_{t-1})$ by a very small term $\mathcal{O}({1\over \sqrt{t}})$ as long as the iteration number $t$ is very large.   We leave the detailed regret analysis as an interesting future work.

\section{Experiments}
\label{sect:experiments}
To verify \textsc{FTRL-AUC} in experiments, we aim to answer the following three questions:
\begin{itemize}[leftmargin=*]
\item \textbf{Q1}: Compared with the state-of-the-art online AUC optimization methods, can \textsc{FTRL-AUC} significantly shorten the run time?
\item \textbf{Q2}: Can \textsc{FTRL-AUC} capture sparsity more effectively without significant loss of performance on AUC score?
\item \textbf{Q3}: Does \textsc{FTRL-AUC} have any advantages over the online learning method for logistic loss when the dataset is imbalanced?
\end{itemize}

\subsection{Datasets and baseline methods}

\textbf{Datasets.\quad} We consider the high-dimensional sparse datasets with the task of binary classification. All datasets are summarized in Table~\ref{table:datasets}. More specifically, $n$ is the total number of samples, $n_+$ and $n_-$ is the total number of positive and negative samples respectively. Datasets of real-sim, rcv1b, news20b, and avazu can be downloaded from~\cite{libsvm}. The dataset of farm-ads~\cite{MC2011} can be downloaded from~\cite{NHBM1998}. There are two sentiment classification datasets, reviews~\cite{acl2007} and imdb~\cite{MDP2011}. We also consider a click-through rate prediction dataset avazu~\cite{ffm2016} which has 1 million features and about 14 millions of training samples.

\begin{table}[ht]
\setlength{\tabcolsep}{4pt}
\centering
\caption{High-dimensional sparse datasets}
\label{table:datasets}
\begin{threeparttable}[t]
\begin{tabular}{l l l l l l}
\hline
\rowcolor{gray!40}
datasets & $n$ & $n_{+}$ & $n_{-}$ & $d$ & $\mathcal{O}(k)$\tnote{1}\\
\hline
real-sim & 72,309 & 22,238 & 50,071 & 20,958 & 52 \\
\hline
rcv1b & 697,641 & 365,951 & 331,690 & 46,674 & 74 \\
\hline
farm-ads & 4,143 & 2,210 & 1,933 & 54,876 & 198 \\
\hline
imdb & 50,000 & 25,000 & 25,000 & 89,527 & 136 \\
\hline
reviews & 8,000 & 4,000 & 4,000 & 473,856 & 190 \\
\hline
news20b & 19,996 & 9,999 & 9,997 & 1,355,191 & 455 \\
\hline
avazu & 14,596,137 & 1,734,407 & 12,861,730 & 1,000,000 & 15 \\
\hline
\end{tabular}
\begin{tablenotes}
\item[1] $\mathcal{O}(k)$ of each dataset is calculated by $\ceil*{\sum_{i=1}^n \|\bm x_i\|_0 / n}$. $\mathcal{O}(k)$ tells the number of nonzero entries on $\bm x_i$ on average.
\end{tablenotes}
\end{threeparttable}%
\end{table}

\noindent\textbf{Baseline methods.\quad} We consider two types of method. The first type is for online AUC optimization. It has six methods, including \textsc{SOLAM}, a stochastic online AUC Maximization method proposed in~\cite{ying2016stochastic}, \textsc{SPAM}, a 
stochastic proximal AUC maximization algorithm put forward in~\cite{natole2018stochastic} (\textsc{SPAM}-$\ell^1$, \textsc{SPAM}-$\ell^2$, and \textsc{SPAM}-$\ell^1/\ell^2$ based on the different regularization strategy), \textsc{FSAUC},  a fast
stochastic algorithm for true AUC maximization designed in~\cite{liu2018fast}, and SPAUC proposed in~\cite{ying2019spauc}.\footnote{There are other online AUC optimization methods such as OAM~\cite{zhao2011online}, OPAUC~\cite{gao2013one} and AdaOAM~\cite{ding2015adaptive}. However, these algorithms either need to have $\mathcal{O}(d\times d)$ memory or $\mathcal{O}({T\times d})$ run time, which makes them hard to be applied to high-dimensional sparse datasets.} The second type is the online learning methods that minimize the logistic loss. We mainly consider the sparse-pursuing ones, including \textsc{RDA}-$\ell^1$~\cite{xiao2010dual}, \textsc{AdaGrad}~\cite{duchi2011adaptive}, and \textsc{FTRL-Pro}~\cite{mcmahan2013ad} which is an essentially Follow-The-Regularized-Leader approach proposed in~\cite{mcmahan2010adaptive}. 

\noindent\textbf{Experimental setup.\quad} All methods are implemented in C language with a Python2.7 wrapper. We split all datasets in the following way: 4/6 samples are for training and the rest two 1/6 samples  for validation and testing respectively. We repeat this procedure 10 times and report the results over these 10 trials. Parameter tuning and other detailed experimental setup can be found in the appendix. All methods stop after seeing all training samples. That is, all methods pass training samples once. AUC scores of all convergence curves in our experiments are calculated by using testing datasets.

\subsection{Run time performance}
To answer \textbf{Q1}, as we have claimed, one of the main improvements of our method over baselines is the time complexity. We test all methods on the six high-dimensional datasets and the run time has been shown in the top section of Table~\ref{table:auc-performance}. Clearly, the run time of \textsc{FTRL-AUC} significantly outperforms all the other methods by a large margin. For example, in news20b dataset, it is about 887 times faster than the existing fastest baseline, i.e., \textsc{SPAM}-$\ell^1$ and about 1981 times faster than the slowest, i.e., \textsc{FSAUC}. The run time of \textsc{FSAUC} is worse than the others. This is because it needs to projection $w_t$ on $\ell^1$-ball and it is time consuming. However, our method only needs to have $\mathcal{O}(k)$ multiplication per-iteration, thus time-efficient. For example, in rcv1b dataset, \textsc{FTRL-AUC} only uses 1.609 seconds in average to process about 465,094 training samples with dimension $d=46,674$.

\renewcommand{\arraystretch}{1.1}
\begin{table*}[ht!]
\caption{The performance of AUC optimization methods with respect to run time (seconds), sparse ratio, and AUC score}
\centering
 \begin{threeparttable}[t]
\begin{tabular}{|P{0.07\textwidth}|P{0.11\textwidth}|P{0.095\textwidth}|P{0.095\textwidth}|P{0.095\textwidth}|P{0.11\textwidth}|P{0.11\textwidth}|P{0.095\textwidth}|}
\hline 
 Datasets\tnote{1} & FTRL-AUC & SPAM-$\displaystyle \ell^{1}$ & SPAM-$\displaystyle \ell^{2}$ & SPAM-$\displaystyle \ell^{1} /\ell^{2}$  & SOLAM & SPAUC & \textsc{FSAUC\tnote{2}} \\
 \hline 
\rowcolor{gray!40}
 \multicolumn{8}{|c|}{Run Time\tnote{3} \ (seconds)} \\
\hline 
farm-ads & \textbf{0.015$\pm$0.007} & 1.597$\pm$0.014 & 0.746$\pm$0.221 & 1.586$\pm$0.053 & 1.614$\pm$0.430 & 1.875$\pm$0.137 & 2.346$\pm$0.335  \\\hline
real-sim & \textbf{0.106$\pm$0.029} & 10.513$\pm$0.284 & 3.562$\pm$0.855 & 11.521$\pm$0.696 & 9.741$\pm$3.307 & 14.200$\pm$0.543 & 17.623$\pm$2.704 \\\hline 
rcv1b &  \textbf{1.609$\pm$0.155} & 230.769$\pm$7.121 & 103.251$\pm$25.46 & 234.379$\pm$0.491 & 178.989$\pm$31.705 & 334.820$\pm$28.554 & 424.636$\pm$73.22 \\\hline 
news20b  & \textbf{0.324$\pm$0.059} & 287.676$\pm$18.94 & 355.556$\pm$30.83 & 397.424$\pm$1.688 & 404.911$\pm$151.445 & 347.866$\pm$8.039 & 642.008$\pm$105.9 \\\hline 
reviews & \textbf{0.025$\pm$0.015} & 28.334$\pm$6.604 & 26.529$\pm$10.369 & 35.035$\pm$0.607 & 15.054$\pm$5.854 & 22.944$\pm$5.437 & 21.740$\pm$7.169\\\hline
imdb &  \textbf{0.355$\pm$0.046} & 30.717$\pm$0.984 & 11.603$\pm$1.271 & 58.007$\pm$4.895 & 26.014$\pm$6.423 & 39.728$\pm$0.905 & 53.630$\pm$4.183 \\\hline
\rowcolor{gray!40}
\multicolumn{8}{|c|}{Sparse Ratio ($\|\bm w_t \|_0 / d$)} \\\hline 
farmads & \textbf{.0130$\pm$.0059} & .2903$\pm$.0541 & .8099$\pm$.0203 & .3072$\pm$.0101 & 1.0000$\pm$.0000 & .1933$\pm$.0137 & .8095$\pm$.0202 \\\hline
 real-sim & \textbf{.3236$\pm$.0275} & 1.0000$\pm$.0000 & 1.0000$\pm$.0000 & .9666$\pm$.0301 & 1.0000$\pm$.0000 & .9792$\pm$.0571 & 1.0000$\pm$.0000 \\\hline 
 rcv1b & \textbf{.2987$\pm$.0273} & .8398$\pm$.1175 & .9571$\pm$.0010 & .9054$\pm$.0629 & 1.0000$\pm$.0000 & .7683$\pm$.0193 & .9571$\pm$.0010  \\\hline 
 news20b &  \textbf{.0016$\pm$.0004} & .9224$\pm$.0037 & .9224$\pm$.0037 & .9224$\pm$.0037 & 1.0000$\pm$.0000 & .9220$\pm$.0039 & .9208$\pm$.0039 \\\hline 
reviews & \textbf{.0006$\pm$.0002} & .6622$\pm$.1976 & .7310$\pm$.0051 & .6618$\pm$.1977 & 1.0000$\pm$.0000 & .5155$\pm$.2409 & .7292$\pm$.0064 \\\hline
imdb & \textbf{.0320$\pm$.0081} & .8661$\pm$.0009 & .8661$\pm$.0009 & .6083$\pm$.3940 & 1.0000$\pm$.0000 & .8661$\pm$.0009 & .8661$\pm$.0009 \\\hline
\rowcolor{gray!40}
 \multicolumn{8}{|c|}{AUC score} \\\hline 
farm-ads & \underline{.94290$\pm$.00669}\tnote{4} & .92486$\pm$.00631 & .92914$\pm$.01086 & .92609$\pm$.00599 & .94210$\pm$.00899 & \textbf{.95402$\pm$.00511} & .95212$\pm$.00691 \\\hline
 real-sim & \underline{.99226$\pm$.00081} & .98541$\pm$.00115 & .98741$\pm$.00111 & .97911$\pm$.00532 & .99141$\pm$.00079 & \textbf{.99394$\pm$.00058} & .99331$\pm$.00059 \\\hline 
 rcv1b & \underline{.99488$\pm$.00017} & .99258$\pm$.00074 & .99365$\pm$.00031 & .99257$\pm$.00081 & .99358$\pm$.00016 & .99553$\pm$.00013 & \textbf{.99544$\pm$.0001} \\\hline 
 news20b  & \underline{.97434$\pm$.00193} & .94420$\pm$.02221 & .96008$\pm$.00796 & .94906$\pm$.02245 & .95411$\pm$.00228 & \textbf{.99240$\pm$.00104} & .98260$\pm$.00247 \\\hline 
reviews & \underline{.91320$\pm$.00840} & .88822$\pm$.01426 & .89825$\pm$.01426 & .89407$\pm$.01304 & .90799$\pm$.01220 & \textbf{.93343$\pm$.00915} & .91826$\pm$.00592 \\\hline
imdb & \underline{.94614$\pm$.00275} & .86767$\pm$.01199 & .86911$\pm$.01198 & .80066$\pm$.10940 & .89735$\pm$.00541 & \textbf{.94983$\pm$.00223} & .92644$\pm$.00520 \\\hline
\end{tabular}
\begin{tablenotes}
\item[1] All reported values are averaged on 10 trials of outcomes by randomly shuffling all training samples.
\item[2] One should notice that \textsc{FSAUC} is not a true online learning algorithm. It needs to have the total number of input samples as a prior. We treat it as an offline algorithm as a reference and do not attend to compare it with other online methods.
\item[3] The run time of a specific method is the total running time of passing whole training samples once.
\item[4] Underline of the AUC score means that it is the runner-up among all online methods (excluding the offline method \textsc{FSAUC}).
\end{tablenotes}
\end{threeparttable}%
\label{table:auc-performance}
\end{table*}

\subsection{Model sparsity and AUC performance}
\textbf{Q2} is affirmatively addressed in the following two parts.

\noindent\textbf{Model sparsity.\quad} As discussed, the second improvement is that our algorithm can capture sparsity more effectively. To answer the first part of ~\textbf{Q2}, we measure the sparsity of final model $\bm w_t$ by using the sparse ratio which is defined as the following
\begin{equation}
\operatorname{Sparse\ Ratio} := \frac{\|\bm w_t\|_0}{d}. \nonumber
\end{equation}
Without sacrificing the AUC score, the sparser the model, the better. The middle section of Table~\ref{table:auc-performance} shows clearly that the sparse ratio of the models obtained by \textsc{FTRL-AUC} is much sparser compared with other methods. For example, in imdb dataset, \textsc{FTRL-AUC} needs only about 2,865 features on average to get AUC score 0.94614, while \textsc{SPAUC}, the method getting the best AUC score 0.94983 needs to have up to 77,539 features on average. Clearly, this makes the $\ell^1$-regularization less meaningful for \textsc{SPAUC}. The sparse ratio of \textsc{SOLAM} is always 1.0 because at the beginning, it needs $\bm w_0$ on the ball $\{\bm w: \|\bm w\|\leq R\}$. The excellent sparsity pursuing ability of our method can also been seen via the model selection phase which will be discussed in the later part of this section.

\noindent\textbf{AUC score.\quad} We compare the AUC score of $\bm w_t$ on testing data of all methods. The bottom section of Table~\ref{table:auc-performance} presents the AUC scores. The up-to-now best performance of AUC score is \textsc{SPAUC}~\cite{ying2019spauc} in most cases, a recently developed AUC optimization method. One of the key properties of \textsc{SPAUC} is that each objective function is convex compared with \textsc{SPAM}-based methods. Another advantage of \textsc{SPAUC} is based on a stochastic proximal update step developed in~\cite{duchi2009efficient}. From the empirical evaluation point of view, a possible reason that \textsc{SPAUC} achieves the best performance on AUC score is because it converges faster than these baselines: the learning rate of \textsc{SPAUC} is $\mathcal{O}(1/t)$ while others are $\mathcal{O}(1/\sqrt{t})$. 

By contrast, our method is consistently the runner-up among all online methods. It means that our method is competitive with respect to AUC score. For example, in real-sim, rcv1b, and imdb, the AUC scores of our method are only, 0.169\%, 0.065\%, and 0.388\% less than these of \textsc{SPAUC} respectively. In new20b dataset, our method is 2.167\% less than \textsc{SPAUC}. Again, this is because our method uses $\mathcal{O}(1/\sqrt{t})$ learning rate which is slower than $\mathcal{O}(1/t)$ updates of \textsc{SPAUC}. Furthermore, by pursuing sparse solution, our method threshold out some less frequent  but important nonzero features. However, \textsc{SPAUC} hardly obtain sparse solution by using an $\ell^1$ regularization and has much slower run time. This indicates that our method could be a good alternative if real-world application needs sparse solution due to memory or run time consideration.

\begin{figure}[ht!]
\centering
\includegraphics[width=8.8cm,height=6cm]{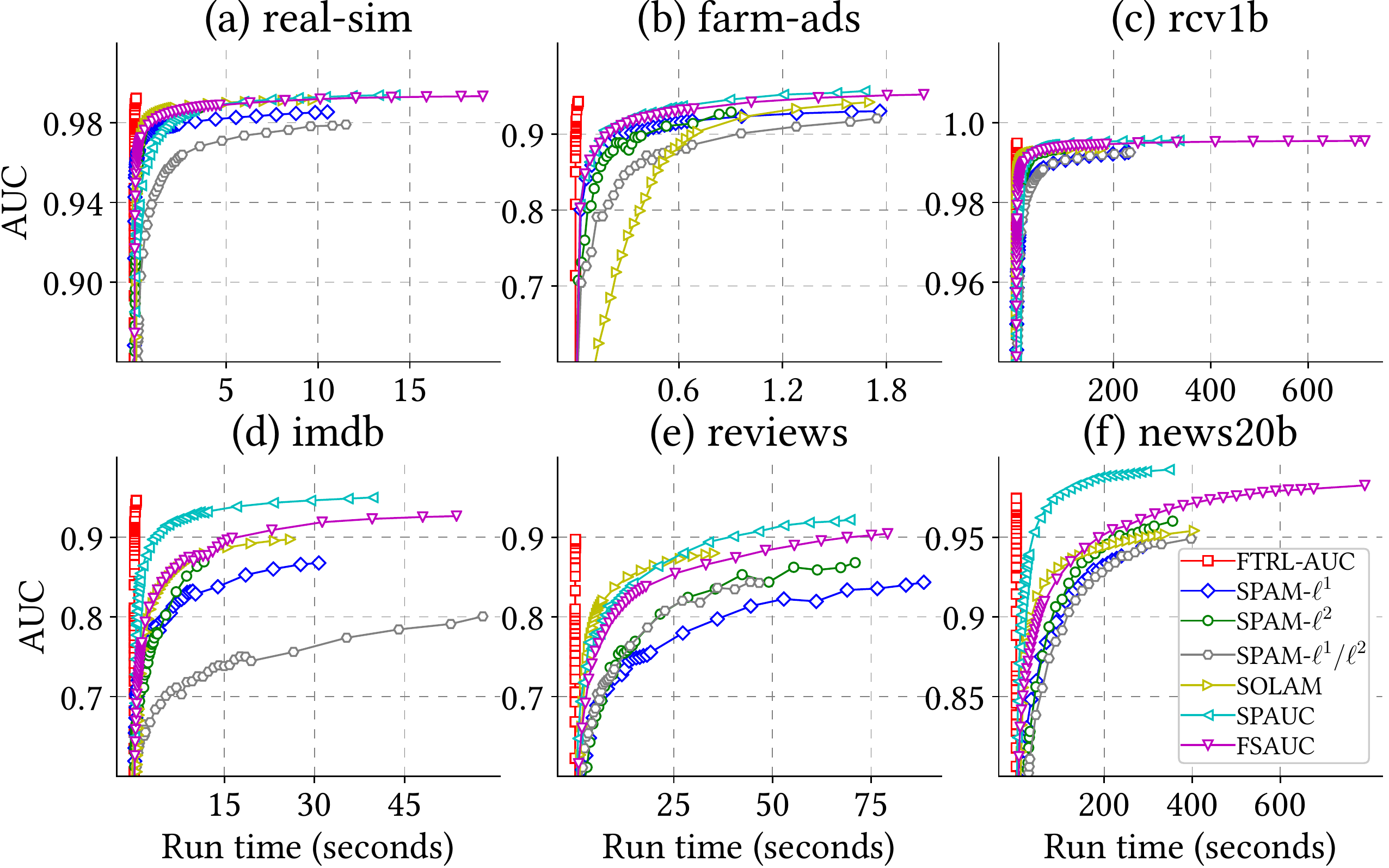}
\caption{Convergence Rate with respect to run time\vspace{-2mm}}
\label{fig:run-time}
\end{figure}

\begin{figure}[ht]
\centering
\includegraphics[width=8.8cm,height=5.5cm]{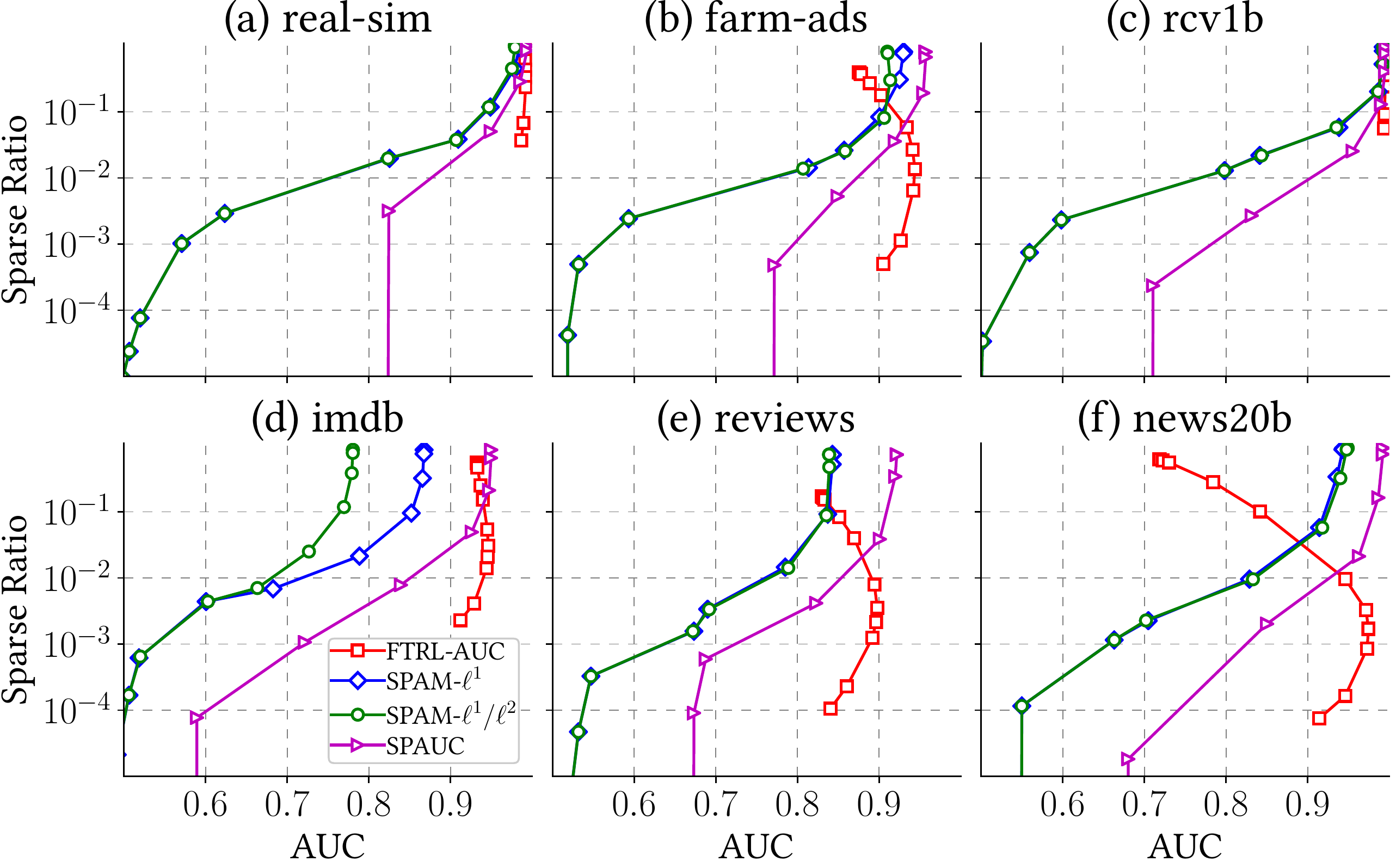}
\caption{Sparse Ratio as a function of the AUC score\vspace{-2mm}}
\label{fig:sparsity}
\end{figure}

The convergence with respect to the run time is illustrated in Figure~\ref{fig:run-time}. This figure clearly demonstrates that our method converges extremely fast over time. The convergence with respect to the number of training samples can be found in the appendix.

\noindent\textbf{Sparsity tuning.\quad} To have a fair comparison, we choose $\lambda$, the $\ell^1$ regularization parameter, from a large range set $\{10^{-8}, 10^{-7}$, $10^{-6}, 10^{-5}, 10^{-4}$, $10^{-3}, 0.005, 0.01, 0.05, 0.1, 0.3, 0.5, 0.7, 1.0, 3.0, 5.0\}$ for \textsc{FTRL-AUC}, \textsc{SPAM}-$\ell^1$, \textsc{SPAM}-$\ell^1/\ell^2$, and \textsc{SPAUC}. We want to see if there is any AUC gain to pursue sparsity. Surprisingly, all baselines of online AUC optimization fail to gain AUC score when pursuing the sparsity. The experimental results are illustrated in Figure~\ref{fig:sparsity}. Clearly, \textsc{FTRL-AUC} boosts the AUC score when it tries the $\lambda$ from lowest $10^{-8}$ to high. However, for the other three methods, the AUC scores decrease dramatically when $\lambda$ increases. One possible explanations is that the ability of FTRL tries to use all previous gradient information $\sum_{t=1}^T \nabla \widehat{f}_t(\bm w_t)$ while all the other methods only use current gradient information. To approximate the true gradient, the accumulated gradients are clearly more stable and effective than the single gradient.

\begin{figure}[ht]
\centering
\includegraphics[width=8.8cm,height=5.5cm]{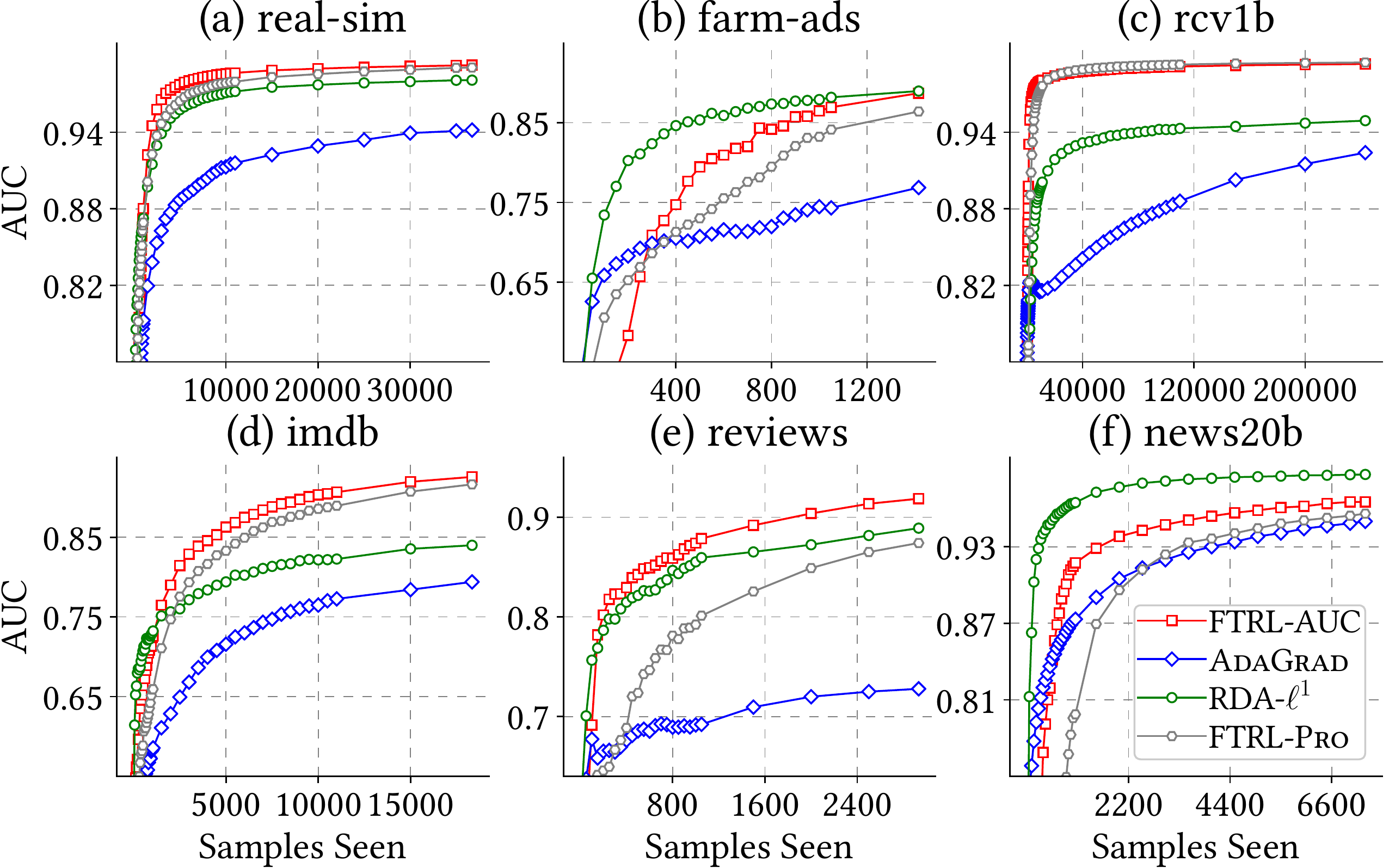}
\caption{AUC as function of the number of samples seen}
\label{fig:curves-imbalance-iter}
\end{figure}

\begin{table}[ht!]
\renewcommand{\tabcolsep}{3pt}
\caption{Comparison of AUC score with logistic loss methods}
\centering
 \begin{threeparttable}[t]
\begin{tabular}{|P{0.065\textwidth}|P{0.09\textwidth}|P{0.08\textwidth}|P{0.09\textwidth}|P{0.09\textwidth}|}\hline
Datasets & FTRL-AUC & AdaGrad & RDA-$\ell^1$ & FTRL-Pro \\\hline 
\rowcolor{gray!40}
\multicolumn{5}{|c|}{AUC Score $(T_+/T_-=0.1)$} \\\hline
farm-ads & .8870$\pm$.0169 & .7686$\pm$.0487 & \textbf{.8897$\pm$.0335} & .8641$\pm$.0203 \\\hline
 real-sim & \textbf{.9926$\pm$.0007} & .9417$\pm$.0046 & .9809$\pm$.0018 & .9907$\pm$.0012 \\\hline 
 rcv1b & .9935$\pm$.0004 & .9240$\pm$.0023 & .9491$\pm$.0018 & \textbf{.9947$\pm$.0003} \\\hline 
 news20b  & .9752$\pm$.0054 & .9599$\pm$.0061 & \textbf{.9967$\pm$.0010} & .9660$\pm$.0062 \\\hline 
reviews &  \textbf{.9187$\pm$.0130} & .7278$\pm$.0341 & .8891$\pm$.0161 & .8742$\pm$.0088 \\\hline
imdb &  \textbf{.9256$\pm$.0055} & .7939$\pm$.0164 & .8399$\pm$.0178 & .9163$\pm$.0050 \\\hline
\rowcolor{gray!40}
\multicolumn{5}{|c|}{AUC Score $(T_+/T_-=0.05)$} \\\hline
farm-ads & .8723$\pm$.0343 & .6697$\pm$.1260 & \textbf{.8784$\pm$.0331} & .8076$\pm$.0515 \\\hline
 real-sim & \textbf{.9939$\pm$.0007} & .9413$\pm$.0066 & .9789$\pm$.0031 & .9899$\pm$.0014 \\\hline 
 rcv1b & .9925$\pm$.0003 & .8997$\pm$.0039 & .9439$\pm$.0019 & \textbf{.9938$\pm$.0006} \\\hline 
 news20b  & .9638$\pm$.0120  & .9490$\pm$.0173 & \textbf{.9969$\pm$.0017} & .9541$\pm$.0140 \\\hline 
reviews &  \textbf{.8781$\pm$.0291} & .6584$\pm$.0505 & .8746$\pm$.0236  & .8260$\pm$.0297 \\\hline
imdb &  \textbf{.9133$\pm$.0152}  & .8159$\pm$.0262 & .8412$\pm$.0175 & .9021$\pm$.0163 \\\hline
\end{tabular}
\end{threeparttable}%
\label{tab:imbalanced}
\end{table}

\subsection{Performance on imbalanced datasets}

As we have mentioned, our method could better optimize the AUC score when the dataset is imbalanced. To answer \textbf{Q3}, we compare \textsc{FTRL-AUC} with the online learning algorithms for logistic loss including \textsc{RDA}-$\ell^1$, \textsc{AdaGrad}, and \textsc{FTRL-Pro}. To make these six high-dimensional sparse datasets imbalanced, we use the following strategy: In our first experiment, we keep all negative training samples ($T_-$ in total) and only keep the first $0.1 * T_-$ positive samples so that the imbalance ratio  is low $T_+/T_- = 0.1$. Similarly, in our second experiment, we only keep the first $0.05*T_-$ positive samples. The rest experimental setup remains unchanged. 

The AUC scores of these two experiments are reported in Table~\ref{tab:imbalanced}.  Compared with \textsc{FTRL-Pro}\footnote{The only difference between our method and \textsc{FTRL-Pro} is that \textsc{FTRL-Pro} uses logistic loss but our method uses AUC loss defined in~(\ref{inequ:auc-objective}).}, our method have much higher AUC score on farm-ads, reviews. The best performance of \textsc{FTRL-Pro} on rcv1b is 0.9947 while \textsc{RDA}-$\ell^1$ obtains the best performance on farm-ads and news20b. For rcv1b dataset, our method is only 0.12\% less than the best one, still competitive. From Figure~\ref{fig:curves-imbalance-iter} ($T_+/T=0.1$), we can see that the convergence curve of \textsc{FTRL-AUC} than that of \textsc{FTRL-Pro}, indicating that advantage of our method. One may notice that \textsc{RDA}-$\ell^1$ is much better than ours on news20b dataset. This may be because \textsc{RDA}-$\ell^1$ but it performs much worse than ours on imdb, reviews, and rcv1b datasets. Figure~\ref{fig:para-select-imblance} presents the parameter tuning of $\lambda$ of these four methods.

\begin{figure}[ht!]
\centering
\includegraphics[width=8.8cm,height=5.5cm]{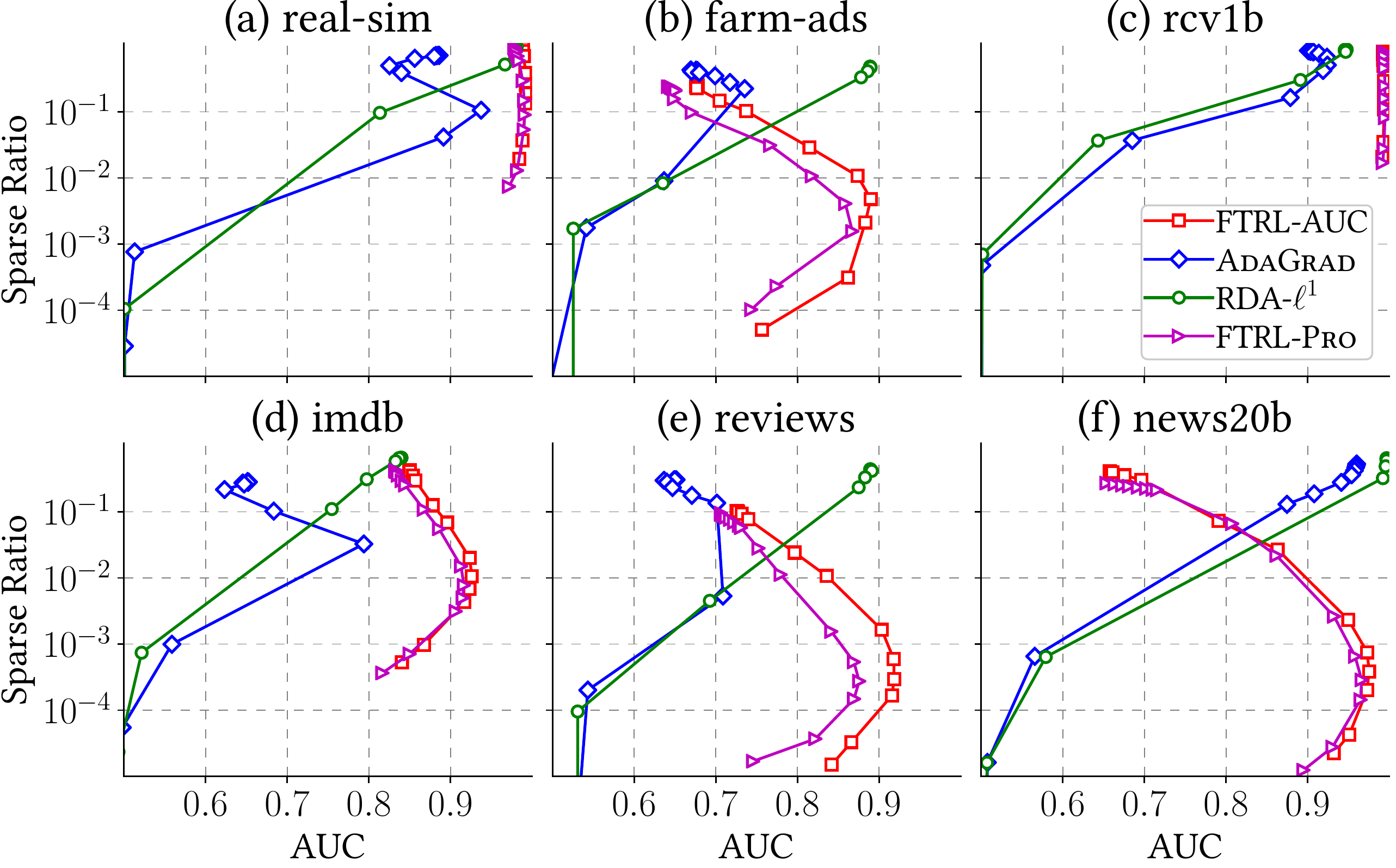}
\caption{Sparse Ratio as a function of AUC score}
\label{fig:para-select-imblance}
\end{figure}

\begin{figure}[ht!]
\centering
\includegraphics[width=7cm,height=5cm]{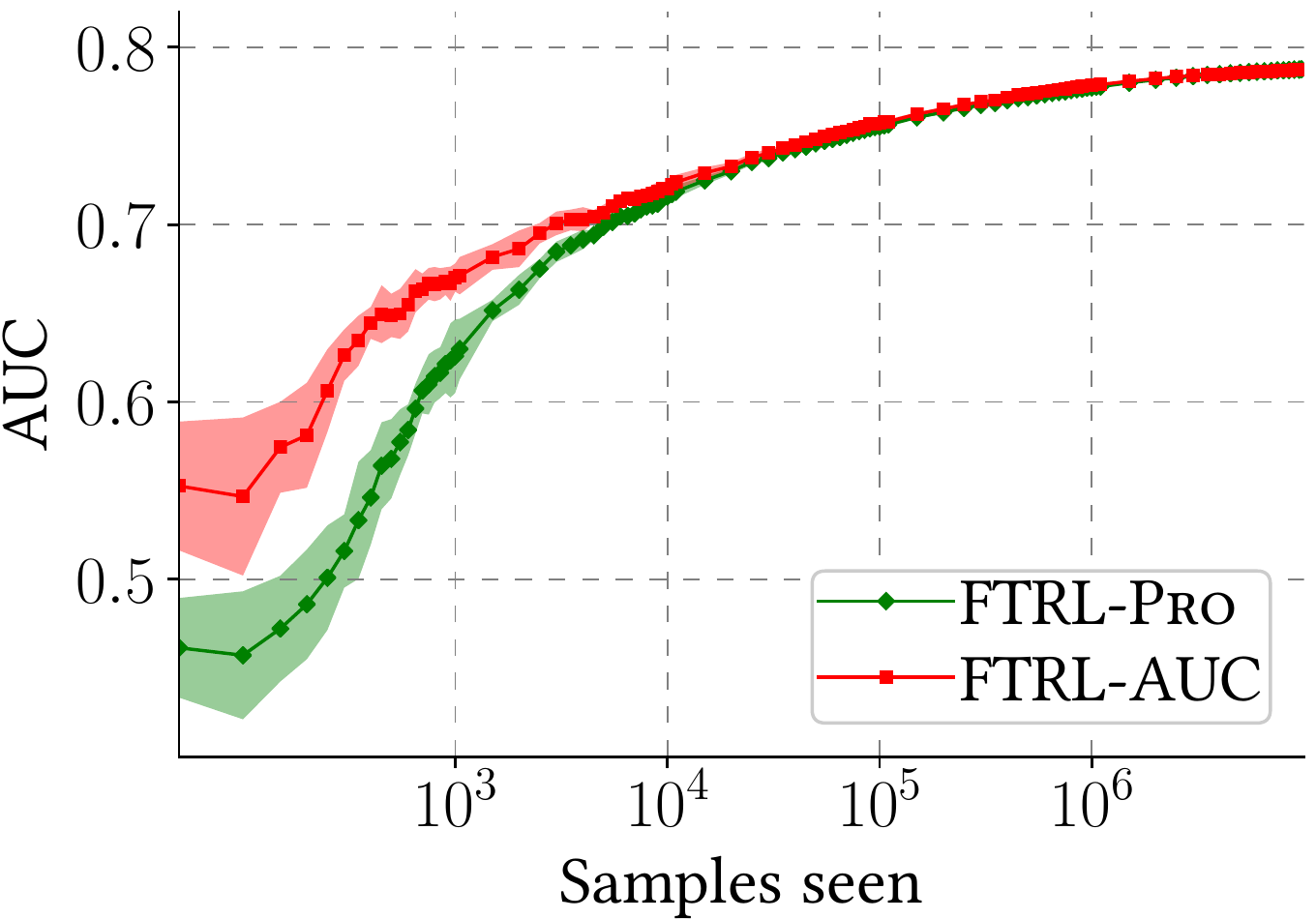}
\caption{The convergence curve with respect to the number of training samples seen on Avazu dataset}
\label{fig:avazu}
\end{figure}

\noindent\textbf{Avazu: Click‐through rate prediction.\quad} The task of online click-through rate prediction is to predict whether a user will click an advertisement or not. We compare our method with \textsc{FTRL-Pro} on avazu of the Kaggle dataset where the original dataset is released at \url{https://www.kaggle.com/c/avazu-ctr-prediction}, which has about 14 million samples. In the feature engineering step, we use the data preprocessing step, a field-aware factorization machines proposed in~\cite{ffm2016}. Each training sample has 1 million features after the factorization processing. Again, we split them as training, validation, and testing samples by 4:1:1. Both methods have the same parameter space. We randomly shuffle the dataset 10 times and run both methods and report the convergence curve in Figure~\ref{fig:avazu}. When the two algorithms receive enough training samples, the AUC scores do not have much difference. However, at the early stage of the learning process, \textsc{FTRL-AUC} achieves significantly higher AUC scores than \textsc{FTRL-Pro}.

\section{Conclusion and Future Work}
\label{sect:conclusion}
To conclude, in this paper, we propose a faster online AUC optimization algorithm based on a generalized follow-the-regularized-leader framework. By using a new ``lazy update'' formula, we reduce the per-iteration time cost from $\mathcal{O}(d)$ to $\mathcal{O}(k)$. Our experimental results demonstrate that \textsc{FTRL-AUC} can significantly reduce the run time as well as obtain sparse models more effectively. This makes our method attractive to very high-dimensional sparse datasets. For the future work, an interesting research question is that what is the error between the proposed loss and the true loss and how to control it. Furthermore, we can try to improve the AUC score performance without loss of the run time and model sparsity advantages. One potential direction is to explore how to incorporate the second order information. Also, it remains interesting to see if our work can be generalized to multi-classification tasks.

\section{Acknowledgement}

The authors would like to thank anonymous reviewers for their helpful comments on the paper. The work of Yiming Ying is supported by NSF grants IIS-1816227 and IIS-2008532. This work was partially supported by NSF grants IIS-1926751, IIS-1927227, IIS-1546113, and the New York Empire Innovation Program.

\bibliographystyle{unsrt}
\bibliography{references}

\appendix

\subsection{Reproducibility and detailed experimental Setup}
\label{append:reprod}
\textbf{Implementation Details.\quad} To reproduce results including results of baselines, we present implementation details as follows:
\begin{itemize}[leftmargin=*]
\item All methods are implemented in C11, a C standard revision, language with Python2.7 as a wrapper. The experiments are executed in a cluster with 5 nodes. Each node has 28 CPUs and 250Gb memory. For each method, we only use 1 CPU at a time.
\item The random seeds for all trials are np.random.seed(17), which makes results of AUC scores and sparse ratios reproducible.
\item Critical operations of all baseline methods are scale product $c\cdot \bm x$ and the inner product $\langle\bm x, \bm y \rangle=\bm x^\top \bm y$, which are calculated by $\text{cblas}\_\text{dscal}()$ and $\text{cblas}\_\text{ddot}()$ respectively. These two functions are provided by OpenBLAS~\cite{xianyi2012model}~\footnote{\url{https://github.com/xianyi/OpenBLAS} with version 0.3.1 (Accessed in February 2020)}, an optimized BLAS library.

\item For \textsc{SPAM}-$\ell^1$, \textsc{SPAM}-$\ell^2$, and \textsc{SPAM}-$\ell^1/\ell^2$, since they need to estimate $\widehat{p}_T, a(\bm w_t), b(\bm w_t)$, and $\alpha(\bm w_t)$, in our experiments, we estimate them by using $p_t, a_t(\bm w_t), b_t(\bm w_t),$ and $\alpha_t(\bm w_t)$ defined in~(\ref{inequ:new-ft}).
\item For \textsc{FSAUC}, there exists a projection step onto a $\ell^1$-norm ball. The projection used in the original implementation is the method proposed in~\cite{duchi2008}. However, there exists a much faster version of $\ell^1$-ball projection~\cite{condat2016fast} as claimed has $\mathcal{O}(d)$ run time in practice~\footnote{The C version code can be download from \url{https://lcondat.github.io/download/condat_l1ballproj.c (Accessed in February 2020)}.}.
\end{itemize}

\begin{figure}[ht]
\centering
\includegraphics[width=8cm,height=5cm]{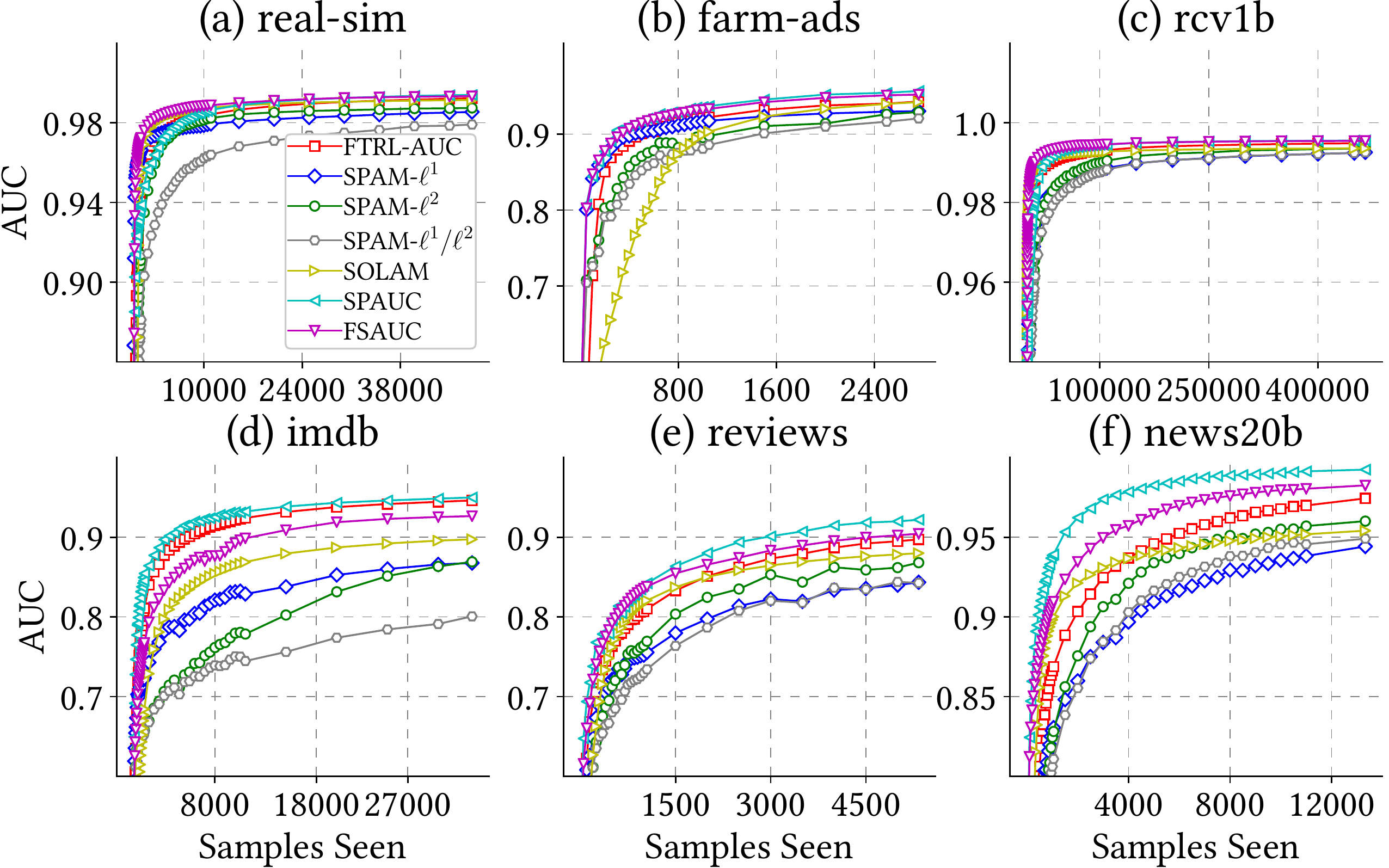}
\caption{Convergence rate with respect to the number of training samples seen}
\label{fig:all-converge-curve-iter}
\end{figure}

\noindent\textbf{Parameter Tuning.\quad} We list parameter tuning of all methods including the baseline methods as follows:
\begin{itemize}[leftmargin=*]
\item \textsc{FTRL-AUC}, it has two parameters. The $\ell^1$-regularization parameter $\lambda$ which is from a sufficient large range $\{10^{-8}, 10^{-7}$,\ldots, $10^{-3}$, 0.005, 0.01, 0.05, 0.1, 0.3, 0.5, 0.7, 1.0, 3.0, 5.0$\}$, and the initial learning rate $\gamma$ is from $\{10^{-5}, 5\cdot 10^{-5}$, 0.0001, 0.0005, 0.001, 0.005, 0.01, 0.05, 0.1, 0.5, 1.0, 5.0$\}$.
\item \textsc{SPAM}-$\ell^1$ has two parameters. The initial learning rate $\xi$ is from $\{10^{-3},10^{-2},10^{-1},10^{0},10^{1},10^{2},10^{3}\}$. The $\ell^1$-regularization parameter is the same as \textsc{FTRL-AUC}'s.
\item \sloppy\textsc{SPAM}-$\ell^2$ has two parameters. The initial learning rate $\xi$ is from $\{10^{-3},10^{-2},10^{-1},10^{0},10^{1},10^{2},10^{3}\}$. The $\ell^2$-regularization parameter is the same as $\lambda$.
\item \sloppy \textsc{SPAM}-$\ell^1/\ell^2$ has 3 main parameters. To avoid large cross-validation time, the parameter $\xi$ and $\lambda^2$ is used by the parameter tuned from \textsc{SPAM}-$\ell^2$. We only tune the $\ell^1$ parameter $\lambda_1$ which is the same $\lambda$.
\item \textsc{FSAUC} has 2 parameters. The $\ell^1$-norm ball which is from $\{10^{-1}, 10^{0},\ldots,10^{5}\}$. The corresponding initial learning rate is from $\{2^{-10},2^{-9},2^{-8},\ldots,2^{8},2^{9},2^{10}\}$ as suggested in~\cite{liu2018fast}.
\item \textsc{SOLAM} has two parameters. The $\ell^2$-norm ball diameter which is from $\{10^{-1}, 10^{0},\ldots,10^{5}\}$ and the initial learning rate $\xi \in \{1.0,10.0,19.0,28.0,\ldots,100.0\}$ as suggested in~\cite{ying2016stochastic}.
\item \textsc{SPAUC} has two parameters. The initial learning rate parameter is from $\{10^{-7.0},10^{-6.5},10^{-6.0},\ldots,10^{-2.5}\}$. Since we use the $\ell^1$-regularization and it is the same $\lambda$.
\item \textsc{FTRL-Pro} has the same parameter tuning strategy as \textsc{FTRL-AUC}.
\item \textsc{RDA}-$\ell^1$ has three parameters. It has an initial learning rate from the range $\{10.0, 50.0, 100.0, 500.0, 1000.0, 5000.0\}$. The sparsity-enhancing parameter is from $\{0.0,0.005\}$, where 0.0 corresponding to non-enhancing sparsity. The $\lambda$ is the same as \textsc{FTRL-AUC}'s.
\item \sloppy \textsc{AdaGrad} has three parameters. The $\epsilon$ is fixed to $10^{-8}$ to avoid the divided by zero error. The learning rate parameter is from $\{0.001, 0.01, 0.1, 1.0, 10.0, 50.0, 100.0, 500.0, 1000.0, 5000.0\}$ while $\lambda$ is the same as others.
\end{itemize}

\subsection{More Results}
\label{append:more-results}

We first provide the experimental details for Figure~\ref{fig:compare-update-rule}. We fix the initial learning rate $\gamma = 1.0$ and the sparsity parameter $\lambda = 0.5$. The convergence curves illustrate in Figure~\ref{fig:compare-update-rule} and~\ref{fig:compare-update-rule-2} are the AUC scores averaged on 10 trials. 

\begin{figure}[ht]
\centering
\includegraphics[width=8.52cm,height=3.51cm]{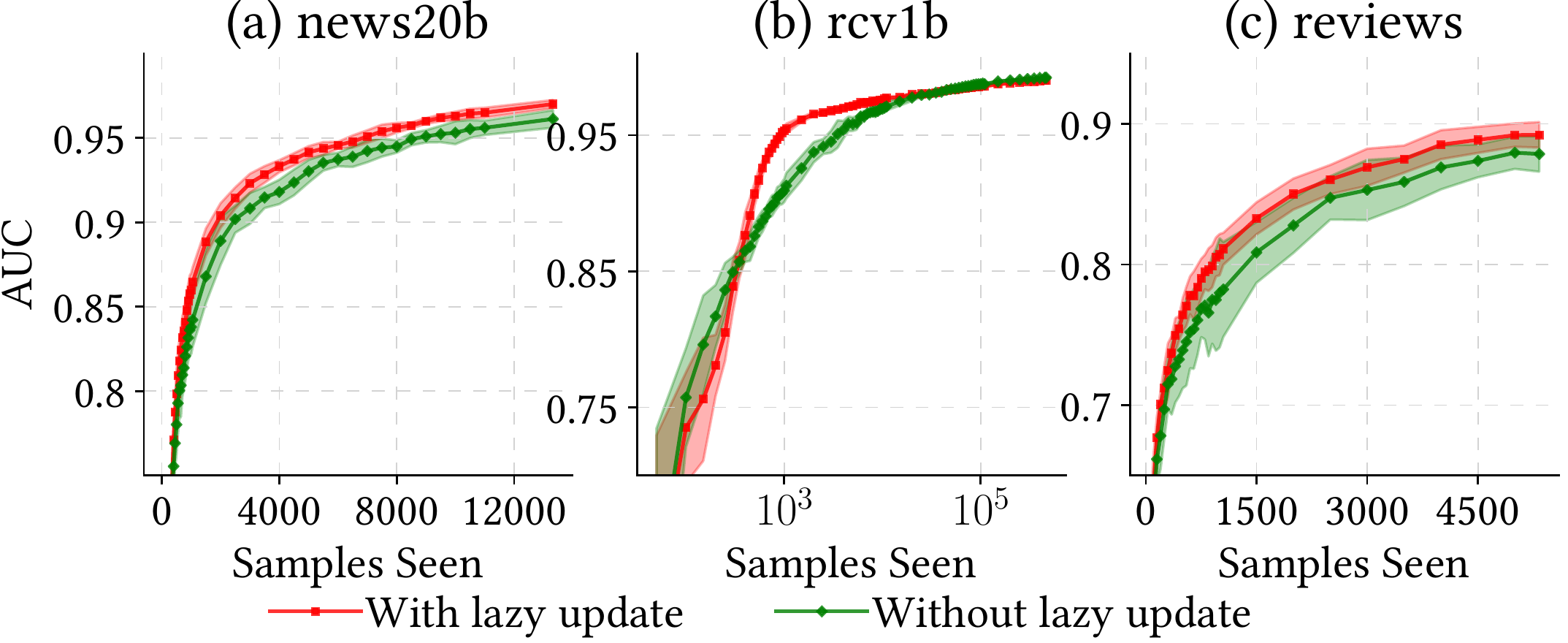}
\caption{Comparison of with and without ``lazy update''}
\label{fig:compare-update-rule-2}
\end{figure}

The convergence curve with respect to the number training samples seen are illustrated in Figure~\ref{fig:all-converge-curve-iter}. In general, the performance of \textsc{FTRL-AUC} on the convergence is better than \textsc{SPAM}-based. Figure~\ref{fig:10} illustrates the convergence curve as a function the number of training samples seen for the datasets of imbalance ratio $T_+ / T = 0.05$. Figure~\ref{fig:11} illustrate the sparse ratio and corresponding AUC scores as a function of the parameter $\lambda$ for the datasets of imbalance ratio $T_+ / T = 0.05$.

\begin{figure}[!ht]
\centering
\includegraphics[width=8cm,height=5cm]{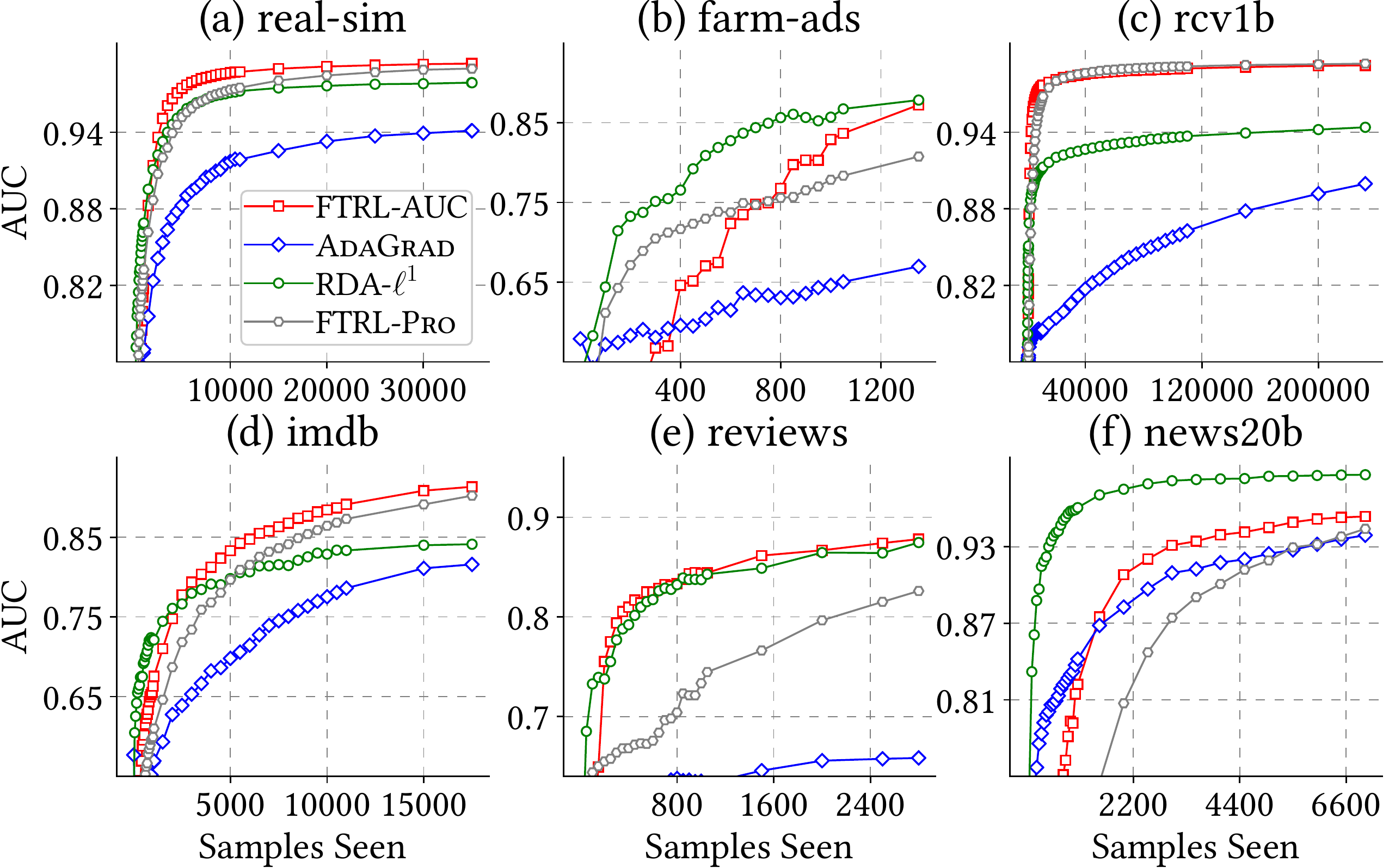}
\caption{Convergence curve with respect to the number of training samples seen}
\label{fig:10}
\end{figure}

\begin{figure}[!ht]
\centering
\includegraphics[width=8cm,height=5cm]{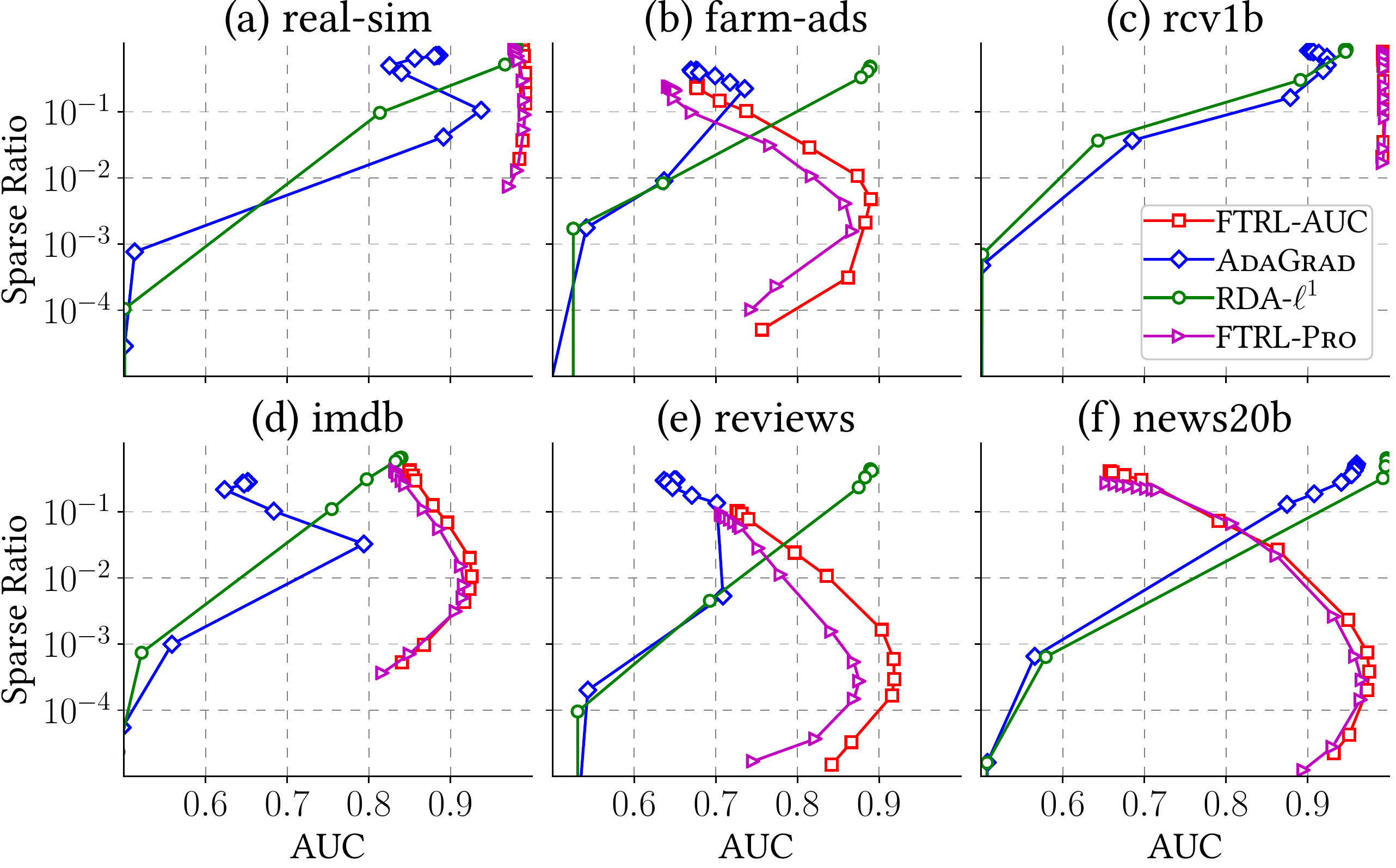}
\caption{Sparse Ratio as a function of the AUC score.}
\label{fig:11}
\end{figure}
\end{document}